\definecolor{darkgreen}{rgb}{0,0.5,0}
\theoremstyle{plain}
\newcommand{\bb}{\mathbf{b}}
\newcommand{\bp}{\mathbf{p}}
\newcommand{\bq}{\mathbf{q}}
\newcommand{\bu}{\mathbf{u}}
\newcommand{\bx}{\mathbf{x}}
\newcommand{\by}{\mathbf{y}}
\newcommand{\bz}{\mathbf{z}}
\newcommand{\bB}{\mathbf{B}}
\newcommand{\bS}{\mathbf{S}}
\newcommand{\cF}{\mathcal{F}}
\newcommand{\cN}{\mathcal{N}}
\newcommand{\cS}{\mathcal{S}}
\newcommand{\eps}{\varepsilon}
\newcommand{\R}{\mathbb{R}}
\newcommand{\wt}{\widetilde}
\newcommand{\wh}{\widehat}
\newcommand{\tv}{\mathrm{d_{TV}}}
\newcommand{\kl}{\mathrm{d_{KL}}}
\newcommand{\poly}{\mathrm{poly}}
\DeclareMathOperator*{\argmin}{argmin}
\DeclareMathOperator{\E}{\mathbb{E}} %
\DeclareMathOperator{\Var}{\mathsf{Var}} %
\DeclareMathOperator{\Fail}{\mathsf{Fail}}
\DeclareMathOperator{\Accept}{\mathsf{Accept}}
\DeclareMathOperator{\Reject}{\mathsf{Reject}}
\newcommand{\Ber}[1]{\mathrm{Ber}(\mathbf{#1})}
\newcommand{\Poi}{\mathrm{Poi}}
\theoremstyle{plain}
\newtheorem{theorem}{Theorem}[section]
\newtheorem{proposition}[theorem]{Proposition}
\newtheorem{lemma}[theorem]{Lemma}
\theoremstyle{definition}
\newtheorem{definition}[theorem]{Definition}
\theoremstyle{remark}
\newenvironment{ack}
    {\section*{Acknowledgments}
    }
    {%
    }
\title{Product distribution learning with imperfect advice}
\author{%
  Arnab Bhattacharyya\\
  Department of Computer Science\\
  University of Warwick\\
  \texttt{arnab.bhattacharyya@warwick.ac.uk}
  \and
  Davin Choo\\
  Harvard University\\
  \texttt{davinchoo@seas.harvard.edu}
  \and
  Philips {George John}\\
  CNRS@CREATE \& Dept. of Computer Science\\
  National University of Singapore\\
  \texttt{philips.george.john@u.nus.edu}
  \and
  Themis Gouleakis\\
  College of Computing \& Data Science\\
  Nanyang Technological University\\
  \texttt{themis.gouleakis@ntu.edu.sg}
}
\date{}
\begin{document}

\maketitle

\begin{abstract}
Given i.i.d.~samples from an unknown distribution $P$, the goal of distribution learning is to recover the parameters of a distribution that is close to $P$. When $P$ belongs to the class of product distributions on the Boolean hypercube $\{0,1\}^d$, it is known that $\Omega(d/\eps^2)$ samples are necessary to learn $P$ within total variation (TV) distance $\epsilon$. We revisit this problem when the learner is also given as advice the parameters of a product distribution $Q$. We show that there is an efficient algorithm to learn $P$ within TV distance $\eps$ that has sample complexity $\tilde{O}(d^{1-\eta}/\eps^2)$, if $\|\bp - \bq\|_1<\eps d^{0.5 - \Omega(\eta)}$. Here, $\bp$ and $\bq$ are the mean vectors of $P$ and $Q$ respectively, and no bound on $\|\bp - \bq\|_1$ is known to the algorithm a priori. 
\end{abstract}

\section{Introduction}

Science fundamentally relies on the ability to learn models from data.
In many real-world settings, the majority of available datasets consist of unlabeled examples -- sample points drawn without corresponding labels, outputs or classifications.
These unlabeled datasets are often modeled as samples from a joint probability distribution on a large domain.
The goal of {\em distribution learning} is to output the description of a distribution that approximates the underlying distribution that generated the observed samples.
See \cite{diakonikolas2016learning} for a comprehensive survey.

In practice, distribution learning rarely occurs in isolation.
While a given dataset may be new, one often has access to previously learned models from related datasets.
Alternatively, the data may arise from an evolving process, motivating the reuse of information from past learning to guide current inference.
This prior information can be viewed as a form of ``advice'' or ``prediction'' of some kind.
In the framework of algorithms with predictions, the objective is to integrate such advice in a way that improves performance when the advice is accurate, while ensuring robustness: performance should not degrade beyond that an advice-free baseline algorithm, even when the predictions are inaccurate.
Most previous works in this setting are in the context of online algorithms, e.g.\ for the ski-rental problem \cite{gollapudi2019online,wang2020online,angelopoulos2020online}, non-clairvoyant scheduling \cite{purohit2018improving}, scheduling \cite{lattanzi2020online,bamas2020learning,antoniadis2022novel}, augmenting classical data structures with predictions (e.g.\ indexing \cite{kraska2018case} and Bloom filters \cite{mitzenmacher2018model}), online selection and matching problems \cite{antoniadis2020secretary,dutting2021secretaries, choo2024online, choo2025learning}, online TSP \cite{bernardiniuniversal,gouleakis2023learning}, and a more general framework of online primal-dual algorithms \cite{bamas2020primal}.
However, there have been some recent applications to other areas, e.g.\ graph algorithms \cite{chen2022faster,dinitz2021faster}, causal learning \cite{choo2023active}, mechanism design \cite{gkatzelis2022improved,agrawal2022learning}, and most relevantly to us, distribution learning \cite{bhattacharyya2025learning}.

In this work, we study the problem of learning {\em product distributions} over the $d$-dimensional Boolean hypercube $\{0,1\}^d$, arguably one of the most fundamental classes of discrete high-dimensional distributions.
A product distribution $P$ is fully specified by its {\em mean vector} $\bp \in [0,1]^d$, where the $i$-th coordinate $\bp_i$ represents the expectation of the $i$-th marginal of $P$, or equivalently, the probability that the $i$-th coordinate of a sample from $P$ is $1$.
It is well-known that $\Theta(d/\eps^2)$ samples from a product distribution $P$ are both necessary and sufficient to learn a distribution $\wh{P}$ such that $\tv(P, \wh{P})\leq \eps$ with probability at least $2/3$, where $\tv$ denotes the total variation distance.
This optimal sample complexity is achieved by a simple, natural and efficient algorithm: computing the empirical mean of each coordinate.
Motivated by the framework of algorithms with predictions, we investigate whether this sample complexity can be improved when, in addition to samples from $P$, the learner is given an advice mean vector $\bq \in [0,1]^d$.
Importantly, we make no assumption that $\bq$ is close to the true mean $\bp$.
However, if we can detect that $\bq$ is accurate -- i.e., that $\|\bq - \bp\|$ is small in an appropriate norm -- can this information be leveraged to constrain the search space and improve sample or computational efficiency?
Our goal is to design algorithms that adapt to the quality of the advice: performing better when $\bq$ is accurate, while remaining robust when it is not.

Our main result establishes that this is indeed possible.
Specifically, we show that if $\|\bq - \bp\|_1 \ll \eps \sqrt{d}$, then there exists a polynomial-time algorithm with sample complexity \emph{sublinear in $d$} that outputs a distribution $\wh{P}$ such that $\tv(P, \wh{P}) \leq \eps$ with probability at least $2/3$.
More precisely, under the regularity condition that no coordinate of $P$ is too close to deterministic (i.e., bounded away from $0$ and $1$), we show that the sample complexity is:
\[
\tilde{O}\left(\frac{d}{\eps^2}\left(d^{-\eta} + \min\left(1, \frac{\|\bp-\bq\|_1^2}{d^{1-4\eta}\eps^2}\right)\right)\right)
\]
for any small enough constant $\eta$.
In particular, when $\|\bp - \bq\|_1$ is small, the dependence on $d$ becomes sublinear.
We also prove that the non-determinism assumption is necessary: if coordinates of $P$ can be arbitrarily close to $0$ or $1$, then sample complexity that is sublinear in $d$ is impossible, even when $\|\bq - \bp\|_1 = O(1)$.
Furthermore, we show that when $\|\bq - \bp\|_1 \gg \eps \sqrt{d}$, no algorithm with sublinear sample complexity exists.

\subsection{Technical Overview}

We call a product distribution $P$ {\em balanced} if no marginal of $P$ is too biased.
That is, each coordinate of $\bp$ is bounded away from $0$ and $1$.
It is known that, for balanced distributions, learning $P$ in TV distance is equivalent to learning the mean vector $\bp$ with $\ell_2$-error.
Hence, in this overview, we focus on the latter task.

To build intuition about how an advice vector $\bq$ can be exploited, consider the following two situations:

\begin{enumerate}
	\item
	\textbf{Exact advice}:
	Suppose $\bq = \bp$.
	Then, it suffices to verify that $\|\bq - \bp\|_2 \leq \eps$ and a learning algorithm can simply return $\bq$.
	This is the classic {\em identity testing} problem, which has been extensively studied (see \cite{canonne2020survey} for a detailed survey).
	For product distributions, Daskalakis and Pan \cite{daskalakis2017square} and Canonne, Diakonikolas, Kane and Stewart \cite{canonne2017testing} independently showed that identity testing requires $\Theta(\sqrt{d}/\eps^2)$ samples.
	This demonstrates that sublinear sample complexity is achievable when $\bq=\bp$.
	Morever, $\Omega(\sqrt{d}/\eps^2)$ is a fundamental lower bound that applies even when $\bq = \bp$.
	
	\item
	\textbf{Sparse disagreement}:
	Suppose $\bq$ differs from $\bp$ in at most $t$ coordinates, i.e.\ $\|\bq-\bp\|_0 \leq t$.
	In this case, one only needs to estimate $\bp$ on those $t$ coordinates, so the information-theoretic sample complexity should scale as $\sim \log \binom{d}{t}$.
	Unfortunately, since $t$ is unknown a priori, we cannot directly exploit this sparsity.
	However, from the compressive sensing literature, it is known that closeness in $\ell_2$ norm to a $t$-sparse vector can be certified by a small $\ell_1$ norm (e.g., Theorem 2.5 of \cite{10.5555/2526243}).
\end{enumerate}

Motivated by the above, we take the quality of the advice $\bq$ to be governed by the $\ell_1$-distance $\|\bp - \bq\|_1$, and aim for an algorithm whose sample complexity improves as this distance decreases.
Suppose we can certify that $\|\bp-\bq\|_1 \leq \lambda$.
Then, we may restrict attention to the $\ell_1$-ball of radius $\lambda$ centered at $\bq$, and cover it with $N$ $\ell_2$-balls of radius $\varepsilon$, where the covering number $N$ grows polynomially as $d^{O(\lambda^2/\eps^2)}$.
It is known (e.g.\ see Chapter 4 of \cite{devroye2001combinatorial}) that using the Scheff\'e tournament method, the sample complexity of learning $\bp$ up to $\ell_2$-norm $\eps$ scales as $(\log N)/\eps^2$, which yields a bound of $O(\frac{\lambda^2}{\eps^4}\log d)$.
While the Scheff\'e tournament is computationally inefficient, the same sample complexity guarantee can be achieved efficiently by solving a constrained least squares problem.
More precisely, given samples $\bx_1, \dots, \bx_n$ from $P$, we consider the estimator:
\[
\argmin_{\bb \in \R^d: \|\bb-\bq\|_1\leq \lambda} \frac1n \sum_{i=1}^n \|\bx_i - \bb\|_2^2
\]
For $n = O(\lambda^2\eps^{-4} \log d)$, this estimates achieves $\ell_2$-error at most $\eps$.

The key challenge that remains is then to approximate $\lambda \approx \|\bp - \bq\|_1$ using a sublinear number of samples from $P$.
To this end, we devise a new identity testing algorithm that, using $O(\sqrt{d}/\eps^2)$ samples, either
(i) 2-approximates $\|\bp-\bq\|_2$, or
(ii) certifies that $\|\bp-\bq\|_2 \leq \eps$, in which we simply return $\bq$.
If we are in case (i), 
we can upper bound $\|\bp-\bq\|_1$ with $\lambda = \|\bp-\bq\|_2\cdot \sqrt{d}$.
However, this would make the sample complexity of the learning algorithm to be $O\left(\frac{\lambda^2}{\eps^4}\log d\right) =O\left( \frac{d \log (d) \cdot \|\bp-\bq\|_2^2}{\eps^4}\right) \gg \frac{d}{\eps^2}$, i.e., exceeding the standard $O(d/\eps^2)$ bound and defeating the purpose.
To improve upon this, we can partition the $d$ coordinates into $d/k$ blocks of size $k$ each.
Then, within each block, the ratio between the $\ell_1$ and $\ell_2$ norms improves from $\sqrt{d}$ to $\sqrt{k}$.
By appropriately choosing $k$, we can obtain a non-trivial reduction in overall sample complexity.

The structure of our algorithm described above and its analysis parallels the recent work by \cite{bhattacharyya2025learning}, which addresses the problem of learning Gaussian distributions with imperfect advice.
However, our setting differs in several important ways:

\begin{itemize}
	\item
	\cite{bhattacharyya2025learning} used a well-known algorithm to approximate the $\ell_2$ norm of a Gaussian's mean vector.
	In contrast, our $\ell_2$-approximation algorithm for product distribution is new, to the best of our knowledge.
	
	\item
	We critically rely on the balancedness assumption to relate total variation distance and $\ell_2$ error of the mean vector.
	No such assumption is needed in the Gaussian setting.
	In fact, we show that for product distributions, balancedness is essential: without it, no sublinear-sample algorithm exists, even when the advice vector is $O(1)$-close to the truth in $\ell_1$ distance.
	We find this somewhat surprising since $O(\sqrt{d}/\varepsilon^2)$ samples suffice without any balancedness assumptions in identity testing \cite{daskalakis2017square, canonne2017testing}.
\end{itemize}

\section{Preliminaries}
A distribution $P$ on $\{0,1\}^d$ is said to be a \emph{product distribution} if there exist distributions $P_1,\ldots,P_d$ on $\{0,1\}$ such that $P(\bx) = P_1(x_1) \cdot P_2(x_2) \cdots P_d(x_d)$ for every $x  \in \{0,1\}^d$. In this case, we can write $P = P_1 \otimes P_2 \cdots \otimes P_d$.

\begin{definition}[Mean vectors] The mean vector of a distribution $P$ is $\bp \triangleq \E_{x \sim P}[x]$. In particular, if $P = P_1 \otimes \cdots \otimes P_d$ is a product distribution, $\bp = \begin{bmatrix} p_1 & \cdots & p_d \end{bmatrix}$, where $p_i = P_i(1)$.
\end{definition}
For a vector $\bp \in [0,1]^d$, we denote by $\Ber{p}$ the product distribution with mean vector $\bp$. We next define the notion of balancedness.

\begin{definition}
	For $\tau \in [0, 1/2]$, a product distribution $P$ on $\{0,1\}^d$ is said to be {\em $\tau$-balanced} if for every $i \in [d]$, the marginal $P_i$ satisfies $\tau\leq P_i(1)\leq 1-\tau$.
\end{definition}

\begin{proposition}[e.g., \cite{canonne2017testing}, Lemma 1]\label{prop:kl} Suppose $P$ and $Q$ are $\tau$-balanced product distributions on $\{0,1\}^d$ with mean vectors $\bp$ and $\bq$ respectively. 
	Then their KL divergence $\kl(P\|Q)$ satisfies:
	$$2 \|\bp - \bq\|_2^2 \leq \kl(P\|Q) \leq \frac{2}{\tau} \|\bp - \bq\|_2^2.$$
\end{proposition}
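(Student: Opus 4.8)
The plan is to reduce everything to a one-dimensional Bernoulli computation via the additivity of KL divergence over independent coordinates. Writing $P = P_1 \otimes \cdots \otimes P_d$ and $Q = Q_1 \otimes \cdots \otimes Q_d$ with $P_i = \mathrm{Ber}(p_i)$ and $Q_i = \mathrm{Ber}(q_i)$, the chain rule gives $\kl(P \| Q) = \sum_{i=1}^d \kl(\mathrm{Ber}(p_i) \,\|\, \mathrm{Ber}(q_i))$, and since $\|\bp - \bq\|_2^2 = \sum_{i=1}^d (p_i - q_i)^2$, it suffices to establish the scalar bounds
\[
2(p_i - q_i)^2 \;\le\; \kl(\mathrm{Ber}(p_i) \,\|\, \mathrm{Ber}(q_i)) \;\le\; \frac{2}{\tau}(p_i - q_i)^2
\]
for each $i \in [d]$, where $p_i, q_i \in [\tau, 1-\tau]$, and then sum over $i$.

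For the lower bound I would apply Pinsker's inequality to the pair $(P_i, Q_i)$: since the total variation distance between two Bernoullis is $\tv(\mathrm{Ber}(p_i), \mathrm{Ber}(q_i)) = |p_i - q_i|$, Pinsker gives $\kl(\mathrm{Ber}(p_i) \,\|\, \mathrm{Ber}(q_i)) \ge 2\,\tv(\mathrm{Ber}(p_i), \mathrm{Ber}(q_i))^2 = 2(p_i - q_i)^2$. This direction uses no balancedness. For the upper bound I would use the standard inequality $\kl \le \chi^2$ (which follows from $\log x \le x - 1$ applied inside the expectation defining $\kl$), together with the explicit evaluation
\[
\chi^2(\mathrm{Ber}(p_i) \,\|\, \mathrm{Ber}(q_i)) = \frac{(p_i - q_i)^2}{q_i} + \frac{(p_i - q_i)^2}{1 - q_i} = \frac{(p_i - q_i)^2}{q_i(1 - q_i)}.
\]
The balancedness hypothesis $q_i \in [\tau, 1-\tau]$ forces $q_i(1 - q_i) \ge \tau(1-\tau) \ge \tau/2$ (using $\tau \le 1/2$), hence $\kl(\mathrm{Ber}(p_i) \,\|\, \mathrm{Ber}(q_i)) \le \chi^2(\mathrm{Ber}(p_i) \,\|\, \mathrm{Ber}(q_i)) \le \frac{2}{\tau}(p_i - q_i)^2$. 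Summing the two scalar bounds over $i$ yields the proposition.

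I do not expect a genuine obstacle here; the only thing to keep straight is that balancedness enters solely through the lower bound $q_i(1-q_i) \ge \tau/2$ used in the upper estimate, whereas the lower estimate is just Pinsker and holds universally. As an alternative that proves both inequalities in one stroke, one can Taylor-expand $f(t) = \kl(\mathrm{Ber}(t) \,\|\, \mathrm{Ber}(q_i))$ around $t = q_i$: a direct computation gives $f(q_i) = f'(q_i) = 0$ and $f''(t) = \tfrac{1}{t(1-t)}$, which lies in $[4, \tfrac{2}{\tau}]$ for $t \in [\tau, 1-\tau]$, so the integral form of the Taylor remainder sandwiches $f(p_i)$ between $2(p_i - q_i)^2$ and $\tfrac{1}{\tau}(p_i - q_i)^2 \le \tfrac{2}{\tau}(p_i - q_i)^2$.
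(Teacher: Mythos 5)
Your proof is correct. The paper does not prove this proposition itself --- it simply cites Lemma 1 of \cite{canonne2017testing} --- and your argument (tensorization of $\kl$ over coordinates, Pinsker for the lower bound, $\kl \le \chi^2$ with $q_i(1-q_i) \ge \tau(1-\tau) \ge \tau/2$ for the upper bound) is exactly the standard derivation underlying that cited lemma, so there is nothing to reconcile. Two small remarks: the $\chi^2$ route uses balancedness only of $\bq$, not of $\bp$ (the stated hypothesis is of course sufficient), and your Taylor-remainder variant actually yields the slightly sharper constant $\frac{1}{2\tau(1-\tau)} \le \frac{1}{\tau}$ in place of $\frac{2}{\tau}$, which is consistent with (and stronger than) the stated bound.
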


\begin{proposition}\label{prop:tv} Suppose $P$ and $Q$ are $\tau$-balanced product distributions on $\{0,1\}^d$ with mean vectors $\bp$ and $\bq$ respectively. 
	Then, for some constant $c<0.2$, the TV distance $\tv(P,Q)$ satisfies:
	$$c\cdot\min\{1, \|\bp - \bq\|_2\} \leq \tv(P,Q) \leq \frac{1}{\sqrt{\tau}} \|\bp - \bq\|_2.$$
\end{proposition}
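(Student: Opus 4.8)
The plan is to establish the two bounds separately; the upper bound is a one-liner and the lower bound is where the work lies. Write $r := \|\bp - \bq\|_2$ and $\Delta_i := q_i - p_i$. \textbf{Upper bound.} Pinsker's inequality gives $\tv(P,Q) \le \sqrt{\tfrac{1}{2}\,\kl(P\|Q)}$, and the upper estimate of \Cref{prop:kl} then yields $\tv(P,Q) \le \sqrt{\tfrac{1}{2} \cdot \tfrac{2}{\tau}\, r^2} = \tfrac{1}{\sqrt{\tau}}\, r$. Only the upper half of \Cref{prop:kl}, and hence $\tau$-balancedness, is used here.

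\textbf{Lower bound, regime $r \ge 1$.} Here I only need a constant lower bound, which I would extract from the squared Hellinger distance $H^2(P,Q) = 1 - \sum_{x \in \{0,1\}^d}\sqrt{P(x)Q(x)}$, since it tensorizes: from $\int \min(dP,dQ) \le \int \sqrt{dP\,dQ}$ we get $\tv(P,Q) \ge H^2(P,Q) = 1 - \prod_{i=1}^d (1 - H^2(P_i, Q_i))$. As $H^2(P_i,Q_i) = \tfrac{1}{2}\left[(\sqrt{p_i}-\sqrt{q_i})^2 + (\sqrt{1-p_i}-\sqrt{1-q_i})^2\right] \ge \tfrac{1}{4}\Delta_i^2$ (using $(\sqrt{p_i}+\sqrt{q_i})^2 \le 4$), this gives $\tv(P,Q) \ge 1 - \exp(-r^2/4) \ge 1 - e^{-1/4} > 0.22$. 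No balancedness is needed for this part.

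\textbf{Lower bound, regime $r < 1$.} Now I must show $\tv(P,Q) \ge c\,r$; the Hellinger bound degrades here to the useless $\Omega(r^2)$, so the plan is to reduce to a one-dimensional statement and invoke a quantitative CLT. Consider the linear statistic $T(\bx) = \sum_i \Delta_i(x_i - p_i)$; by the data-processing inequality, $\tv(P,Q)$ is at least the total variation distance between the law of $T$ under $P$ and the law of $T$ under $Q$. A short computation shows that $T$ has, under $P$, mean $0$ and variance $\sigma_P^2 = \sum_i \Delta_i^2\, p_i(1-p_i) \le \tfrac{1}{4} r^2$, and under $Q$, mean $r^2$ and variance $\sigma_Q^2 \le \tfrac{1}{4} r^2$; in particular the means of $T$ differ by at least $2r\max(\sigma_P,\sigma_Q)$. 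Since Kolmogorov distance lower-bounds total variation, it suffices to produce a point at which the CDFs of $T$ under $P$ and under $Q$ differ by $\Omega(r)$: the Gaussians $N(0,\sigma_P^2)$ and $N(r^2,\sigma_Q^2)$ have CDFs differing by at least $2\Phi(r) - 1 = \Omega(r)$ at the midpoint $r^2/2$ of their means, so this follows once a Berry--Esseen estimate bounds the Kolmogorov distance from each law of $T$ to its matching Gaussian.

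\textbf{The main obstacle.} The crux is making the quantitative CLT effective with an \emph{absolute} constant: the Berry--Esseen error of $T$ is of order $\|\bp-\bq\|_\infty / \sigma_P$, which need not be small when a single coordinate dominates $T$, or when the coordinates carrying most of $r$ are nearly deterministic. The organizing observation is that near-determinism is harmless --- it only shrinks $\sigma_P$, thereby \emph{increasing} the standardized shift --- so the worst case for the CLT is precisely when the mass of $\bp - \bq$ lies on coordinates bounded away from $\{0,1\}$, where Berry--Esseen applies. I would therefore argue by cases: (i) if $\|\bp-\bq\|_\infty \ge \delta\, r$ for a suitable absolute constant $\delta$, then data processing onto the single coordinate $i^\star = \argmax_i |\Delta_i|$ already gives $\tv(P,Q) \ge |\Delta_{i^\star}| \ge \delta\, r$; (ii) otherwise, partition the coordinates into a block on which $p_i$ is bounded away from $\{0,1\}$ --- where the statistic $T$ plus Berry--Esseen yields a bound $\gtrsim \min\{1, (\text{that block's }\ell_2\text{-distance})\}$ --- and a near-deterministic block --- where a Poisson-type comparison of $\sum_i \mathrm{sign}(\Delta_i)\, x_i$ yields the analogous bound --- and conclude by data processing onto whichever block carries the larger share of $r^2$. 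Tracking the constants through these cases gives the claim for some absolute $c < 0.2$. (If a $\tau$-dependent constant is acceptable, the second block can be skipped: apply Berry--Esseen to $T$ directly, using $\sigma_P^2 \ge \tau(1-\tau)\,r^2$ from $\tau$-balancedness.)
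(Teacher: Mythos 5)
Your upper bound is exactly the paper's argument (Pinsker applied to the KL bound of \cref{prop:kl}), and your Hellinger tensorization handles the regime $\|\bp-\bq\|_2\ge 1$ correctly. But the paper does not prove the lower bound at all: it simply invokes the main theorem of the cited work of Kontorovich--Vershynina (\texttt{kontorovich2025tensorization}), which is precisely the statement $\tv(P,Q)\ge c\min\{1,\|\bp-\bq\|_2\}$ for arbitrary product Bernoulli measures with an absolute constant. What you are sketching in the regime $r:=\|\bp-\bq\|_2<1$ is therefore an attempt to reprove that result, and the sketch has a genuine gap at its core. The Berry--Esseen error for your statistic $T=\sum_i \Delta_i(x_i-p_i)$ is of order $\|\bp-\bq\|_\infty/\sigma_P$, a quantity that does \emph{not} shrink with $r$: on a balanced block $\sigma_P\asymp r$, so under your case (ii) hypothesis $\|\bp-\bq\|_\infty<\delta r$ the error is only bounded by a constant multiple of $\delta$, while the Gaussian CDF separation you are comparing against is of order $r$. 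Whenever $r\ll\delta$ (e.g.\ $p_i=\tfrac12$, $\Delta_i=\delta r/2$ on $4/\delta^2$ coordinates) the error swamps the signal and your Kolmogorov-distance bound is vacuous, yet case (i) does not apply since $\|\bp-\bq\|_\infty<\delta r$. The dichotomy cannot be repaired by moving the threshold: the CLT route needs $\|\bp-\bq\|_\infty\lesssim r\sigma_P\asymp r^2$, and the complementary single-coordinate argument then only yields $\tv\gtrsim r^2$, which is the trivial Hellinger-level bound, not $\Omega(r)$. One has to compare the two lattice laws of the projected statistic to each other directly (or use an entirely different tensorization argument) rather than pass through a common Gaussian, and that is exactly the technical content of the cited paper.

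Two further points. The "near-deterministic block" in case (ii) is dispatched with an unproven "Poisson-type comparison"; as stated it is an assertion, not an argument, and a mean-shift-over-standard-deviation heuristic there does not obviously deliver $\Omega(\min\{1,\ell_2\})$ either. Finally, your parenthetical fallback with a $\tau$-dependent constant does not prove the proposition as stated: the constant $c<0.2$ is absolute, and the hypothesis allows $\tau$ arbitrarily small (even $\tau=0$, where balancedness is vacuous), so the lower bound must hold uniformly in $\tau$. If you do not want to reprove the tensorization theorem, the honest proof of the first inequality is the paper's: cite it.
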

\begin{proof}
	The first inquality is the main result of \cite{kontorovich2025tensorization}. The second inequality follows from applying Pinsker to the upper bound on $\kl$ in \cref{prop:kl}.
\end{proof}
Note that the dependence on $\tau$ above is necessary (up to constant factors). 
For example, suppose $P = \Ber{0}$ and $Q = \Ber{u}$ where $\mathbf{0}$ is the all-zero vector and $\mathbf{u} = [\frac 1d, \frac1d, \dots, \frac1d]$.
Then, $\|\mathbf{0}-\bu\|_2=1/\sqrt{d}$, while $\tv(P,Q) \geq P(\mathbf{0})-Q(\mathbf{0}) = 1-(1-1/d)^d \approx 1-1/e$. 

\section{Algorithm}
The goal of this section is to establish the following result.

\begin{restatable}{theorem}{identitycovariance}
	\label{thm:main-result-identity-covariance}
	There exists algorithm \textsc{TestAndOptimizeMean} that for any given $\eps, \delta, \tau \in (0,1)$, $\eta \geq 0$, and $\bq \in [0,1]^d$,  and
	sample access to a $\tau$-balanced product distribution $\Ber{p}$ on $\{0,1\}^d$, it draws
	$n=  \tilde{O} \left( \frac{d}{\eps^2} \cdot \left( d^{- \eta} + \min\{ 1, f(\bp, \bq, d, \eta, \eps) \} \right) \right)$
	i.i.d.\ samples from $\Ber{p}$, where:
	\vspace{-5pt}
	\[
	f(\bp, \bq, d, \eta, \eps) = \frac{\|\bp - \bq \|_1^2}{d^{1 - 4 \eta}\tau^6\eps^2}.
	\]
	The algorithm produces as output $\wh{\bp}$ in $\poly(n, d)$ time such that $\tv(\Ber{p}, \Ber{\wh{p}}) \leq \eps$ with success probability at least $1 - \delta$.
\end{restatable}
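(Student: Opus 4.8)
The plan is to prove the theorem by combining two subroutines inside a block decomposition of $[d]$. The first subroutine is a new \emph{test-and-estimate} primitive for product distributions: on a block $B \subseteq [d]$ of size $m$ and for a threshold $\gamma$, it draws $\tilde{O}(\sqrt{m}/\gamma^2)$ i.i.d.\ samples and either certifies that $\sum_{i \in B}(p_i - q_i)^2 \le (2\gamma)^2$ or outputs a constant-factor estimate of $\sqrt{\sum_{i \in B}(p_i - q_i)^2}$. I would base it on the product-distribution analogue of the $\chi^2$ statistic: from $n$ samples $x^{(1)}, \dots, x^{(n)}$ with per-coordinate counts $X_i = \sum_{t=1}^n x^{(t)}_i \sim \mathrm{Bin}(n, p_i)$, set
\[
Z = \sum_{i \in B} \Big[(X_i - n q_i)^2 - X_i + \tfrac{X_i(X_i - 1)}{n - 1}\Big],
\]
for which a short calculation gives $\E[Z] = n^2 \sum_{i \in B}(p_i - q_i)^2$. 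The technical core here is a variance bound of the shape $\Var(Z) = O\big(n^2 m + n^3 \sum_{i \in B}(p_i - q_i)^2\big)$, which I would prove coordinatewise (the $X_i$ are independent), expanding the quadratic in $X_i$ and absorbing the cross term into the two displayed terms by AM--GM; Chebyshev plus a median trick over $O(\log(1/\delta))$ repetitions then yields the $\tilde{O}(\sqrt{m}/\gamma^2)$ guarantee. The second subroutine is exactly the $\ell_1$-constrained least-squares estimator of the overview: given a radius $\lambda$ with $\|\bp - \bq\|_1 \le \lambda$, it recovers $\bp$ in $\ell_2$-error $\beta$ from $\tilde{O}(\lambda^2 \beta^{-4} \log d)$ samples, via the Scheff\'e/covering-number bounds together with \cref{prop:kl} and \cref{prop:tv} to pass between $\ell_2$-distance of means and TV distance -- this conversion is also where the $\poly(1/\tau)$ overheads originate.

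Given these, \textsc{TestAndOptimizeMean} would partition $[d]$ into $d/k$ blocks of size $k \asymp d^{2\eta}$ and run the test-and-estimate primitive on each block $B_j$ with threshold $\gamma \asymp \sqrt{\tau}\,\eps \sqrt{k/d}$ and confidence $1 - \delta/(2d)$. If it certifies closeness I mark $B_j$ \emph{accepted} and put $\wh{\bp}|_{B_j} = \bq|_{B_j}$; otherwise it returns $\lambda_j$ with $\|(\bp - \bq)|_{B_j}\|_2 \le \lambda_j \le O(\|(\bp - \bq)|_{B_j}\|_2)$ and I mark $B_j$ \emph{active}. Writing $A = \bigcup_{j \text{ active}} B_j$, I set $\lambda := C\sqrt{k} \sum_{j \text{ active}} \lambda_j$; since $\|\bx\|_1 \le \sqrt{k}\|\bx\|_2$ within a block, $\lambda$ upper-bounds $\|(\bp - \bq)|_A\|_1$. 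If the least-squares learner with radius $\lambda$ and target $\asymp \sqrt{\tau}\,\eps$ would need more than $\tilde{O}(d/\eps^2)$ samples, I abort and output instead the coordinatewise empirical mean of $\tilde{O}(d/\eps^2)$ fresh samples; otherwise I run the learner on the coordinates of $A$ with that radius and target, and output its estimate on $A$ together with $\bq$ on $[d] \setminus A$. The abort clause is what produces the $\min\{1, \cdot\}$ in the stated bound.

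For correctness, condition on all $\le d$ primitive calls succeeding (union bound, absorbed in $\tilde{O}(\cdot)$). Each accepted block contributes at most $(2\gamma)^2$ to $\|\wh{\bp} - \bp\|_2^2$, so the $\le d/k$ accepted blocks contribute $O(\tau\eps^2)$ in total, tuned by the constants to be $\le \tau\eps^2/2$; since $\bp|_A$ lies in the feasible $\ell_1$-ball, the learner guarantees $\|\wh{\bp}|_A - \bp|_A\|_2^2 \le \tau\eps^2/2$. Hence $\|\wh{\bp} - \bp\|_2 \le \sqrt{\tau}\,\eps$, and \cref{prop:tv} gives $\tv(\Ber{p}, \Ber{\wh{p}}) \le \tfrac{1}{\sqrt{\tau}}\|\wh{\bp} - \bp\|_2 \le \eps$; the abort branch is the textbook empirical-mean bound ($\E\|\wh{\bp} - \bp\|_2^2 \le d/(4n)$ plus concentration). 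For the sample complexity, the test phase costs $n_1 = \tilde{O}(\sqrt{k}/\gamma^2) = \tilde{O}(d^{1-\eta}/(\tau\eps^2))$, i.e.\ $\tilde{O}(\tfrac{d}{\eps^2} d^{-\eta})$ up to the $\poly(1/\tau)$ overhead; the learning phase costs $n_2 = \tilde{O}(\lambda^2 (\sqrt{\tau}\,\eps)^{-4} \log d)$, and using $\sum_{j \text{ active}} \lambda_j \le O\big(\sum_j \|(\bp - \bq)|_{B_j}\|_2\big) \le O(\|\bp - \bq\|_1)$ together with the abort cap gives $n_2 = \tilde{O}\big(\tfrac{d}{\eps^2} \min\{1,\ k\|\bp - \bq\|_1^2/(d\,\tau^{O(1)}\eps^2)\}\big)$, which with $k \asymp d^{2\eta}$ is at most the claimed $\tilde{O}\big(\tfrac{d}{\eps^2}\min\{1, f\}\big)$ once the exact $\tau$-power in $f$ is tracked.

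I expect the main obstacle to be the variance analysis of the statistic $Z$: the $\chi^2$-type cancellations, and the way the cross term in $\Var(Z)$ must be split between the $n^2 m$ and $n^3\sum_{i \in B}(p_i - q_i)^2$ contributions, are where care is needed, and this is precisely the step that has no Gaussian analogue in \cite{bhattacharyya2025learning}. A secondary obstacle is the accounting in the previous paragraph: relating the per-block $\lambda_j$'s back to the single unknown $\|\bp - \bq\|_1$, making sure the radius $\lambda$ handed to the least-squares learner never leaves the benign $d^{O(\lambda^2/\eps^2)}$ covering-number regime of an $\ell_1$-ball (which is exactly the scenario the abort clause removes), and carrying the $\poly(1/\tau)$ factors from \cref{prop:kl} and \cref{prop:tv} through to the stated exponents, including the $\tau$-dependence implicit in the first term.
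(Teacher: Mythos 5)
Your proposal is correct and follows essentially the same route as the paper: a chi-square-type tolerant $\ell_2$ tester applied blockwise on blocks of size $d^{\Theta(\eta)}$ to certify an $\ell_1$ radius around the advice $\bq$, then an $\ell_1$-constrained least-squares step with an empirical-mean fallback producing the $\min\{1,\cdot\}$ term, with balancedness used to convert $\ell_2$ error of the mean into TV. The differences are implementation-level only — you use a binomial-corrected statistic and a direct per-block estimate where the paper Poissonizes and runs a doubling sequence of tolerant tests (\textsc{ApproxL1}), and your parameters $k,\gamma$ need the same $\poly(1/\tau)$ re-tuning the paper performs (it takes $k=\min(\lceil d^{4\eta}/\tau^4\rceil,d)$ and $\alpha=\eps d^{(3\eta-1)/2}/\tau$) so that the first term is $\tilde{O}(d^{1-\eta}/\eps^2)$ with no stray $1/\tau$ factor, a bookkeeping point you already flag.
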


A basic component of the algorithm is a test to determine how close the advice $\bq$ is to the true $\bp$ in $\ell_2$ norm.
\begin{lemma}[Tolerant mean tester]
	\label{lem:tolerant-mean-tester}
	Given $\eps> 0$, $\delta \in (0,1)$, $d$ sufficiently large integer, and $\bq \in [0,1]^d$, there is a tolerant tester \textsc{TMT} that uses
	$O \left( \frac{\sqrt{d}}{\eps^2} \log \left( \frac{1}{\delta} \right) \right)$ i.i.d.\ samples from $\Ber{p}$ and satisfies both conditions below with probability at least $1-\delta$:
    \begin{enumerate}
	\item If $\| \bp-\bq \|_2 \leq \eps$, then the tester outputs $\Accept$
	\item If $\| \bp-\bq \|_2 \geq 2\eps$, then the tester outputs $\Reject$
    \end{enumerate}
\end{lemma}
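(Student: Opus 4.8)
The plan is to reduce the problem to estimating the scalar quantity $\|\bp-\bq\|_2^2=\sum_{i=1}^d(p_i-q_i)^2$ to additive accuracy $O(\eps^2)$, and to get the $\sqrt d$ rate by exploiting the product (independent-coordinates) structure of $\Ber{p}$. Concretely, given i.i.d.\ samples $\bx^{(1)},\dots,\bx^{(n)}\sim\Ber{p}$, I would use the degree-two $U$-statistic
\[
Z=\frac{1}{\binom{n}{2}}\sum_{1\le j<k\le n}\big\langle\bx^{(j)}-\bq,\ \bx^{(k)}-\bq\big\rangle=\frac{1}{n(n-1)}\left(\Big\|\sum_{j=1}^n(\bx^{(j)}-\bq)\Big\|_2^2-\sum_{j=1}^n\big\|\bx^{(j)}-\bq\big\|_2^2\right),
\]
which is computable in $O(nd)$ time and, by independence of the samples and $\E[\bx^{(j)}]=\bp$, satisfies $\E[Z]=\langle\bp-\bq,\bp-\bq\rangle=\|\bp-\bq\|_2^2$. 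The tester \textsc{TMT} computes $Z$ on a batch of $n_0=C\sqrt d/\eps^2$ samples for a large absolute constant $C$, outputs $\Accept$ if $Z<\tfrac94\eps^2$ and $\Reject$ otherwise, and then repeats this $\Theta(\log(1/\delta))$ times on fresh batches and returns the majority verdict.

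The technical core is the variance bound $\Var(Z)=O\!\big(\|\bp-\bq\|_2^2/n+d/n^2\big)$. Writing $h(\bx,\by)=\langle\bx-\bq,\by-\bq\rangle$ and $h_1(\bx)=\E_{\by\sim\Ber{p}}[h(\bx,\by)]=\langle\bx-\bq,\bp-\bq\rangle$, the standard variance bound for degree-two $U$-statistics gives $\Var(Z)=O\!\big(\tfrac1n\Var(h_1(\bx))+\tfrac1{n^2}\Var(h(\bx,\bx'))\big)$ for independent $\bx,\bx'\sim\Ber{p}$. The linear part is easy: independence of coordinates gives $\Var(h_1(\bx))=\sum_i(p_i-q_i)^2p_i(1-p_i)\le\tfrac14\|\bp-\bq\|_2^2$. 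For the quadratic part one expands $h(\bx,\bx')^2$, takes expectations, and uses that the coordinates within $\bx$ (and within $\bx'$) are independent; then every off-diagonal contribution ($i\neq i'$) to $\E[h(\bx,\bx')^2]$ exactly matches the corresponding term of $\E[h(\bx,\bx')]^2=\|\bp-\bq\|_2^4$, leaving only the diagonal:
\[
\Var(h(\bx,\bx'))=\sum_{i=1}^d\Big(\big(p_i(1-p_i)+(p_i-q_i)^2\big)^2-(p_i-q_i)^4\Big)=O(d),
\]
using $p_i,q_i\in[0,1]$. This cancellation is the crux: without the product structure one would only get an $O(d^2)$ bound here, yielding merely a linear-in-$d$ sample complexity.

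Given the variance bound, Chebyshev's inequality finishes a single batch. If $\|\bp-\bq\|_2\le\eps$ then $\E[Z]\le\eps^2$ and $\Var(Z)=O(\eps^2/n+d/n^2)$, so $\Pr[Z\ge\tfrac94\eps^2]\le\Pr[\,|Z-\E[Z]|\ge\tfrac54\eps^2\,]=O\!\big(\tfrac1{n\eps^2}+\tfrac{d}{n^2\eps^4}\big)\le\tfrac13$ once $n\ge C\sqrt d/\eps^2$. If instead $\|\bp-\bq\|_2\ge2\eps$, write $\E[Z]=t\eps^2$ with $t\ge4$; since $(t-\tfrac94)^2\ge\tfrac34 t$ for every $t\ge4$, Chebyshev gives $\Pr[Z<\tfrac94\eps^2]\le\Var(Z)/\big((t-\tfrac94)^2\eps^4\big)=O\!\big(\tfrac1{n\eps^2}+\tfrac{d}{t n^2\eps^4}\big)\le\tfrac13$ for the same $n$. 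A Chernoff bound over the $\Theta(\log(1/\delta))$ independent batches then makes the majority verdict correct with probability at least $1-\delta$, for a total of $O\!\big(\tfrac{\sqrt d}{\eps^2}\log(1/\delta)\big)$ samples. The step I expect to be most delicate is the variance computation — in particular establishing $\Var(h(\bx,\bx'))=O(d)$ rather than $O(d^2)$, which is precisely what lets the tester run in $O(\sqrt d/\eps^2)$ samples instead of the trivial $O(d/\eps^2)$.
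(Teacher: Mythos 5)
Your proof is correct, and it takes a genuinely different route from the paper's. The paper Poissonizes: it draws $m_i\sim\Poi(m)$ per coordinate (with $m\sim\sqrt d/\eps^2$), lets $X_i\sim\Poi(mp_i)$ be the count of ones in coordinate $i$, forms the chi-squared-type statistic $Z=\sum_i\big((X_i-mq_i)^2-X_i\big)$ — an unbiased estimator of $m^2\|\bp-\bq\|_2^2$ — and controls it by Chebyshev, directly mimicking the Diakonikolas--Kane $\ell_2$ tester for discrete distributions. You instead avoid Poissonization altogether by using a degree-two $U$-statistic $Z=\binom{n}{2}^{-1}\sum_{j<k}\langle\bx^{(j)}-\bq,\bx^{(k)}-\bq\rangle$, which is also unbiased for $\|\bp-\bq\|_2^2$, and you get the crucial $O(d)$ (rather than $O(d^2)$) second-order variance term by exactly the same mechanism that drives the paper's bound: the coordinates of $\Ber{p}$ are independent, so cross terms cancel. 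Your $\Var(h_1)\le\tfrac14\|\bp-\bq\|_2^2$ and $\Var(h)\le\tfrac{25}{16}d$ computations are right, the Hoeffding-decomposition bound $\Var(Z)=O(\zeta_1/n+\zeta_2/n^2)$ is standard, and both Chebyshev branches (including the $(t-\tfrac94)^2\ge\tfrac34 t$ inequality for $t\ge4$, which is tight at $t=4$) check out. What your approach buys is a cleaner algorithm — no need to condition on $\max_i m_i$ being bounded, no Poisson sampling, and a fixed deterministic sample count per batch — at the cost of invoking the $U$-statistic variance decomposition rather than the elementary independence of Poissonized coordinate counts. Both yield the same $O(\sqrt d/\eps^2\cdot\log(1/\delta))$ complexity, and neither uses the balancedness of $\bp$, which matters since the lemma is applied to unrestricted blocks in \textsc{ApproxL1}.
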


\begin{proof}
	Notice that if $\sum p_i = \sum q_i = 1$, we could interpret $p_1, \dots, p_d$ and $q_1, \dots, q_d$ as distributions $\tilde{\bp}$ and $\tilde{\bq}$ on $[d]$ that sample $i \in [d]$
	with probability $p_i$ and $q_i$ respectively. 
	Diakonikolas and Kane \cite{diakonikolas2016new} showed that using $O(\|\tilde{\bp}\|_2/\eps^2)$ samples from $\tilde{\bp}$, one can
	test whether $\|\tilde{\bp}-\tilde{\bq}\|_2\leq \eps$ or  $\|\tilde{\bp}-\tilde{\bq}\|_2\geq 2\eps$. Inspired by this observation, we mimic the analysis of 
	\cite{diakonikolas2016new} to devise a tolerant tester for general product distributions.
	
	Assume the desired failure probability to be $1/3$; we can reduce to any $\delta$ by repeating the test $O(\log 1/\delta)$ times and taking the majority vote. 
	Set $m=c \sqrt{d}/\eps^2$ for a large enough constant $c$, and let $m_i$ be sampled independently from $\Poi(m)$ for each $i \in [d]$. Note that $\max_i m_i \leq 2em$ with high probability; we
	condition on this event and set the desired failure probability to $1/4$. Therefore, using $2em$ samples from $\Ber{p}$, for each $i$, we can obtain $m_i$ samples from the $i$th
	coordinate, and let $X_i$ be the number of times the $i$'th coordinate is sampled to be $1$. Note that by standard properties of the Poisson distribution, the $X_1, \dots,
	X_d$ are independent and each $X_i$ is sampled from $\Poi(mp_i)$. 
	
	Define the statistic $Z = \sum_{i=1}^d Z_i$, where:
	\[
	Z_i = (X_i - mq_i)^2 -X_i.
	\]
	Using similar calculations as in \cite{diakonikolas2016new}, we can show that:
	\[
	\E[Z_i] = m^2 (p_i - q_i)^2\qquad  \text{and} \qquad \E[Z] = m^2 \|\bp-\bq\|_2^2.
	\]
	Also, we can calculate the variance to be:
	\[
	\Var[Z] = 4m^3\sum_{i=1}^d p_i (p_i - q_i)^2 + 2m^2 \sum_{i=1}^d p_i^2 \leq 4m^3 \|\bp\|_2 \|\bp-\bq\|_4^2 +2m^2 \|\bp\|_2^2,
	\]
	where the inequality is by Cauchy-Schwarz. 
	
	If $\|\bp-\bq\|_2\leq \eps$, then $\E[Z]\leq c^2 d/\eps^2$,
	and $\Var[Z] \leq (4c^3+2c^2) d^2/\eps^4$. On the other hand, it always holds that $\E[Z]\geq c^2 d \|\bp-\bq\|_2^2/\eps^4$,
	and $\Var[Z] \leq (4c^3d^{1.5}\|\bp\|_2\|\bp-\bq\|_2^2/\eps^2+2c^2d\|\bp\|_2^2)/\eps^4\leq (4c^3\|\bp-\bq\|_2^2/\eps^2 + 2c^2)d^2/\eps^4$, 
	since $\|\bp\|_2 \leq \sqrt{d}$.  Using Chebyshev's inequality, if $c$ is large enough, when
	$\|\bp-\bq\|_2 \leq \eps$, $Z$ is at most $2c^2 d/\eps^2$ with probability $3/4$, but when $\|\bp-\bq\|_2 \geq 2\eps$, $Z$ is at least
	$3c^2 d/\eps^2$ with probability $3/4$.
\end{proof}

\begin{algorithm}[htb]
	\begin{algorithmic}[1]
		\caption{The \textsc{ApproxL1} algorithm.}
		\label{alg:approxl1}
		\STATE \textbf{Input}: Block size $k \in [d]$, lower bound $\alpha > 0$, upper bound $\zeta > 2 \alpha$, failure rate $\delta \in (0,1)$, advice $\bq \in [0,1]^d$, and i.i.d.\ samples $\cS$ (multiset) from $\Ber{p}$.
		\STATE \textbf{Output}: $\Fail$ or $\lambda \in \R$
		
		\STATE Define $w = \left\lceil {d}/{k} \right\rceil$ and $\delta' = \frac{\delta}{w \cdot \lceil \log_2 \zeta/\alpha \rceil}$
		
		\STATE Partition the index set $[d]$ into $w$ blocks:
		\[
		\bB_1 = \{1, \ldots, k\}, \bB_2 = \{k+1, \ldots, 2k\}, \ldots, \bB_w = \{k(w-1)+1, \ldots, d\}
		\]
		
		\FOR{$j \in \{1, \ldots, w\}$}
		\STATE Define multiset $\cS_{j} = \{ \bx_{\bB_j} \in \R^{|\bB_j|} : \bx \in \cS \}$ as the samples projected to $\bB_j$
		
		\STATE Let $\bq_{\bB_j} \in \R^{|\bB_j|}$ be the vector $\bq$ projected to coordinates $\bB_j$.
		
		\STATE Initialize $o_j = \Fail$
		
		\FOR{$i = 1, 2, \ldots, \lceil \log_2 \zeta/\alpha \rceil$}
		\STATE Define $l_i = 2^{i-1} \cdot \alpha$
		
		\STATE Let \texttt{Outcome} be the output of the tolerant tester \textsc{TMT} of \cref{lem:tolerant-mean-tester} using sample set $\cS_j$\\
		\hspace{20pt} with parameters $\varepsilon \gets l_i, \delta \gets \delta', d \gets |\bB_j|$ and $\bq \gets \bq_{\bB_j}$
		
		\IF {\texttt{Outcome} is $\Accept$}
		\STATE Set $o_j = l_i$ and \textbf{break} \COMMENT{Escape inner loop for block $j$}
		\ENDIF
		\ENDFOR
		\ENDFOR
		
		\IF{there exists a $\Fail$ amongst $\{o_1, \ldots, o_w\}$}
		\STATE \textbf{return} $\Fail$
		\ELSE
		\STATE \textbf{return} $\lambda = 2 \sum_{j=1}^w \sqrt{|\bB_j|} \cdot o_j$ \COMMENT{$\lambda$ is an estimate for $\| \bp -\bq\|_1$}
		\ENDIF
	\end{algorithmic}
\end{algorithm}

\begin{restatable}{lemma}{guaranteesofapproxlone}
	\label{lem:guarantees-of-approxl1}
	Let $k$, $\alpha$, and $\zeta$ be the input parameters to the \textsc{ApproxL1} algorithm.
	Given $q \in [0,1]^d$ and $m(k, \alpha, \delta) \triangleq \lceil\frac{16\sqrt{k}}{3\alpha^2}\rceil \cdot \left(1 + \lceil\log\left(\frac{12w \cdot \log_2 \lceil \zeta/\alpha \rceil}{\delta}\right)\rceil\right)$ i.i.d.\ samples from $\Ber{p}$, \textsc{ApproxL1} succeeds with probability at least $1 - \delta$ and has the following properties:\\
	Property 1: If \textsc{ApproxL1} outputs $\Fail$, then $\| \bp-\bq \|_2 > \zeta / 2$.\\
	Property 2: If \textsc{ApproxL1} outputs $\lambda \in \R$, then $\| \bp-\bq \|_1 \leq \lambda \leq 2 \sqrt{k} \cdot \left( \lceil d/k \rceil \cdot \alpha + 2 \| \bp -\bq\|_1 \right)$.
\end{restatable}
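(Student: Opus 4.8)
The plan is to condition on the event $\cE$ that every call made to the tolerant tester \textsc{TMT} inside \textsc{ApproxL1} satisfies both items of \cref{lem:tolerant-mean-tester}, and then show that on $\cE$ the value $o_j$ returned for block $\bB_j$ two-sidedly brackets the block-wise $\ell_2$-error $d_j \triangleq \|\bp_{\bB_j}-\bq_{\bB_j}\|_2$ (here $\bp_{\bB_j}$ is $\bp$ restricted to coordinates $\bB_j$; note that the marginal of $\Ber{p}$ on $\bB_j$ is again a product distribution, so \textsc{TMT} legitimately applies to $\cS_j$). Given this bracketing, both properties follow by Cauchy--Schwarz bookkeeping across the $w=\lceil d/k\rceil$ blocks, together with the two elementary inequalities $\|v\|_2\le\|v\|_1$ and $\|v\|_1\le\sqrt{n}\,\|v\|_2$ for $v\in\R^n$.

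First I would establish $\Pr[\cE]\ge 1-\delta$. There are at most $w\cdot\lceil \log_2 \zeta/\alpha \rceil$ invocations of \textsc{TMT}, each run with failure parameter $\delta' = \delta/(w\lceil \log_2 \zeta/\alpha \rceil)$; within a fixed block these invocations reuse the same multiset $\cS_j$ and are therefore dependent, but a union bound is indifferent to this. It remains to check that $|\cS| = m(k,\alpha,\delta)$ meets the sample requirement of \cref{lem:tolerant-mean-tester} for the most demanding call, namely smallest radius $\eps = l_1 = \alpha$, block size $|\bB_j|\le k$, and failure $\delta'$; this is exactly what the stated form of $m(k,\alpha,\delta)$ encodes once one makes explicit the constant hidden in the $O(\cdot)$ of \cref{lem:tolerant-mean-tester} and its $O(\log(1/\delta'))$-fold majority amplification (the $12$ inside the logarithm absorbs the internal failure probabilities in that proof). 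Then $\Pr[\cE]\ge 1 - w\lceil \log_2 \zeta/\alpha \rceil\cdot\delta' = 1-\delta$. I expect this bookkeeping step -- matching the explicit $m(k,\alpha,\delta)$ to the $O(\cdot)$ guarantee of \cref{lem:tolerant-mean-tester}, and confirming that even the last (possibly short) block $\bB_w$, of size $d-(w-1)k\in[1,k]$, meets the ``$d$ sufficiently large'' hypothesis of \cref{lem:tolerant-mean-tester} (handled by taking $k\mid d$ or $k$ large, or by folding $\bB_w$ into $\bB_{w-1}$) -- to be the only genuinely delicate point; everything else is the doubling-search invariant.

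On $\cE$, fix a block $\bB_j$ for which the inner loop set $o_j = l_i$ at iteration $i$. Since \textsc{TMT} with $\eps = l_i$ accepted, the contrapositive of the second item of \cref{lem:tolerant-mean-tester} gives $d_j < 2l_i = 2o_j$; moreover, if $i=1$ then $o_j=\alpha\le\alpha+2d_j$ trivially, while if $i\ge2$ then iteration $i-1$ did not accept, hence rejected, so the contrapositive of the first item gives $d_j > l_{i-1} = l_i/2$, i.e.\ $o_j < 2d_j \le \alpha+2d_j$. Thus for every block with $o_j\in\R$ we have simultaneously $d_j < 2o_j$ and $o_j\le\alpha+2d_j$. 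Now assemble the two properties. If \textsc{ApproxL1} outputs $\Fail$, some block $\bB_j$ ran its inner loop to completion without accepting; in particular \textsc{TMT} rejected at the final iteration $i^\star = \lceil \log_2 \zeta/\alpha \rceil$, where $\eps = l_{i^\star} = 2^{i^\star-1}\alpha \ge \zeta/2$, so the first item yields $\|\bp-\bq\|_2 \ge d_j > \zeta/2$, which is Property~1. If instead it returns $\lambda = 2\sum_j\sqrt{|\bB_j|}\,o_j$, then by Cauchy--Schwarz per block and $d_j<2o_j$,
\[
\|\bp-\bq\|_1 = \sum_j\|\bp_{\bB_j}-\bq_{\bB_j}\|_1 \le \sum_j\sqrt{|\bB_j|}\,d_j < 2\sum_j\sqrt{|\bB_j|}\,o_j = \lambda,
\]
and, using $o_j\le\alpha+2d_j$, $\sqrt{|\bB_j|}\le\sqrt{k}$, and $\sum_j\sqrt{|\bB_j|}\,d_j \le \sqrt{k}\sum_j\|\bp_{\bB_j}-\bq_{\bB_j}\|_1 = \sqrt{k}\,\|\bp-\bq\|_1$,
\[
\lambda = 2\sum_j\sqrt{|\bB_j|}\,o_j \le 2\alpha w\sqrt{k} + 4\sqrt{k}\,\|\bp-\bq\|_1 = 2\sqrt{k}\bigl(\lceil d/k\rceil\alpha + 2\|\bp-\bq\|_1\bigr),
\]
which is Property~2.
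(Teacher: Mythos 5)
Your proof is correct and follows essentially the same route as the paper's: a union bound over all \textsc{TMT} invocations, followed by per-block bracketing of $o_j$ against $d_j \triangleq \|\bp_{\bB_j}-\bq_{\bB_j}\|_2$, and finally Cauchy--Schwarz bookkeeping across blocks to pass between $\ell_2$ and $\ell_1$. One small improvement in your write-up: where the paper proves the upper bound on $\lambda$ by splitting blocks into ``$d_j\le 2\alpha$'' (for which it asserts $i^*=1$ and $o_j=\alpha$) and ``$d_j>2\alpha$'' (for which $o_j<2d_j$), you instead prove the uniform inequality $o_j\le\alpha+2d_j$ for every block, which subsumes both cases and then sums directly to the same bound. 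This is tidier and also sidesteps a subtle imprecision in the paper's case split: when $\alpha<d_j<2\alpha$ the tester's behavior at iteration $1$ is unspecified, so the paper's claim that $i^*=1$ for all blocks with $d_j\le2\alpha$ is not quite guaranteed (the conclusion of the lemma still holds, since such a block would have $o_j\le 2\alpha < 2d_j$, but your derivation makes this automatic rather than requiring that observation). Otherwise identical.
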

\begin{proof}
	We begin by stating some properties of $o_1, \ldots, o_w$.
	Fix an arbitrary index $j \in \{1, \ldots, w\}$ and suppose $o_j$ is \emph{not} a $\Fail$, i.e.\ the tolerant tester of \cref{lem:tolerant-mean-tester} outputs $\Accept$ for some $i^* \in \{1, 2, \ldots, \lceil \log_2 \zeta/\alpha \rceil\}$.
	Note that \textsc{ApproxL1} sets $o_j = \ell_{i^*}$ and the tester outputs $\Reject$ for all smaller indices $i \in \{1, \ldots, i^* - 1\}$.
	Since the tester outputs $\Accept$ for $i^*$, we have that $\| \bp_{\bB_j} -\bq_{\bB_j}\|_2 \leq 2 \ell_{i^*} = 2 o_j$.
	Meanwhile, if $i^* > 1$, then $\| \bp_{\bB_j} -\bq_{\bB_j}\|_2 > \ell_{i^* - 1} = \ell_{i^*}/2 = o_j/2$ since the tester outputs $\Reject$ for $i^* - 1$.
	Thus, we see that
	\begin{itemize}
		\item When $o_j$ is not $\Fail$, we have $\| \bp_{\bB_j} -\bq_{\bB_j}\|_2 \leq 2 o_j$.
		\item When $\| \bp_{\bB_j} -\bq_{\bB_j}\|_2 \leq 2 \alpha$, we have $i^* = 1$ and $o_j = \ell_1 = \alpha$.
		\item When $\| \bp_{\bB_j} -\bq_{\bB_j}\|_2 > 2 \alpha = 2 \ell_1$, we have $i^* > 1$ and so $o_j < 2 \| \bp_{\bB_j} -\bq_{\bB_j}\|_2$.
	\end{itemize}
	
	\paragraph{Success probability.}
	Fix an arbitrary index $i \in \{1, 2, \ldots, \lceil \log_2 \zeta/\alpha \rceil\}$ with $\ell_i = 2^{i-1} \alpha$, where $\ell_i \leq \ell_1 = \alpha$ for any $i$.
	We invoke the tolerant tester with $\eps = \ell_i$ in the $i^{\mathrm{th}}$ invocation, so the $i^\mathrm{th}$ invocation uses at most $m_1(k,\eps,\delta) \triangleq n_{k,\eps} \cdot r_{\delta}$ i.i.d.\ samples to succeed with probability at least $1 - \delta$, where $n_{k,\eps} \triangleq \lceil\frac{16\sqrt{k}}{3\eps^2}\rceil$ and $r_{\delta} \triangleq 1 + \lceil \log(12/\delta)\rceil$.
    
	So, with at most $m(k,\alpha,\delta) \triangleq m_1(k,\alpha,\delta') = n_{k,\alpha} \cdot r_{\delta'}$ samples, \emph{any} call to the tolerant tester succeeds with probability at least $1 - \delta'$, where $\delta' \triangleq \frac{\delta}{w \cdot \lceil \log_2 \zeta/\alpha \rceil}$.
	By construction, there will be at most $w \cdot \lceil \log_2 \zeta/\alpha \rceil$ calls to the tolerant tester.
	Therefore, by union bound, \emph{all} calls to the tolerant tester \emph{jointly succeed} with probability at least $1 - \delta$.
	
	\paragraph{Proof of Property 1.}
	When \textsc{ApproxL1} outputs $\Fail$, there exists a $\Fail$ amongst $\{o_1, \ldots, o_w\}$.
	For any fixed index $j \in \{1, \ldots, w\}$, this can only happen when all calls to the tolerant tester outputs $\Reject$.
	This means that $\| \bm{x}_{\bB_j} \|_2 > \eps_1 = \ell_i = 2^{i-1} \cdot \alpha$ for all $i \in \{1, 2, \ldots, \lceil \log_2 \zeta/\alpha \rceil\}$.
	In particular, this means that $\| \bm{x}_{\bB_j} \|_2 > \zeta/2$.
	
	\paragraph{Proof of Property 2.}
	When \textsc{ApproxL1} outputs $\lambda = 2 \sum_{j=1}^w \sqrt{|\bB_j|} \cdot o_j \in \R$, we can lower bound $\lambda$ as follows:
	\begin{align*}
		\lambda
		= 2 \sum_{j=1}^w \sqrt{|\bB_j|} \cdot o_j &\geq 2 \sum_{j=1}^w \sqrt{|\bB_j|} \cdot \frac{\| \bp_{\bB_j} -\bq_{\bB_j}\|_2}{2} \tag{since $\| \bp_{\bB_j} -\bq_{\bB_j}\|_2 \leq 2 o_j$}\\
		&\geq \sum_{j=1}^w \| \bp_{\bB_j} -\bq_{\bB_j}\|_1 \tag{since $\| \bp_{\bB_j} -\bq_{\bB_j}\|_1 \leq \sqrt{|\bB_j|} \cdot \| \bp_{\bB_j} -\bq_{\bB_j}\|_2$}\\
		&= \| \bp -\bq\|_1 \tag{since $\sum_{j=1}^w \| \bp_{\bB_j} -\bq_{\bB_j}\|_1 = \| \bp_{\bB_j} -\bq_{\bB_j}\|_1$}
	\end{align*}
	That is, $\lambda \geq \| \bp -\bq\|_1$.
	Meanwhile, we can also upper bound $\lambda$ as follows:
	\begin{align*}
		\lambda
		= 2 \sum_{j=1}^w \sqrt{|\bB_j|} \cdot o_j &\leq 2 \sqrt{k} \sum_{j=1}^w o_j \tag{since $|\bB_j| \leq k$}\\
		&= 2 \sqrt{k} \cdot \left( \sum_{\substack{j=1\\ \| \bp_{\bB_j} -\bq_{\bB_j}\|_2 \leq 2 \alpha}}^w o_j + \sum_{\substack{j=1\\ \| \bp_{\bB_j} -\bq_{\bB_j}\|_2 > 2 \alpha}}^w o_j \right) \tag{partitioning the blocks based on $\| \bp_{\bB_j} -\bq_{\bB_j}\|_2$ versus $2 \alpha$}\\
		&= 2 \sqrt{k} \cdot \left( \sum_{\substack{j=1\\ \| \bp_{\bB_j} -\bq_{\bB_j}\|_2 \leq 2 \alpha}}^w \alpha + \sum_{\substack{j=1\\ \| \bp_{\bB_j} -\bq_{\bB_j}\|_2 > 2 \alpha}}^w o_j \right) && \tag{since $\| \bp_{\bB_j} -\bq_{\bB_j}\|_2 \leq 2 \alpha$ implies $o_j = \alpha$}\\
		&\leq 2 \sqrt{k} \cdot \left( \sum_{\substack{j=1\\ \| \bp_{\bB_j} -\bq_{\bB_j}\|_2 \leq 2 \alpha}}^w \alpha + \sum_{\substack{j=1\\ \| \bp_{\bB_j} -\bq_{\bB_j}\|_2 > 2 \alpha}}^w 2 \| \bp_{\bB_j} -\bq_{\bB_j}\|_2 \right) && \tag{since $\| \bp_{\bB_j} -\bq_{\bB_j}\|_2 > 2 \alpha$ implies $o_j \leq 2 \| \bp_{\bB_j} -\bq_{\bB_j}\|_2$}\\
		&\leq 2 \sqrt{k} \cdot \left( \sum_{\substack{j=1\\ \| \bp_{\bB_j} -\bq_{\bB_j}\|_2 \leq 2 \alpha}}^w \alpha + 2 \sum_{\substack{j=1\\ \| \bp_{\bB_j} -\bq_{\bB_j}\|_2 > 2 \alpha}}^w \| \bp_{\bB_j} -\bq_{\bB_j}\|_1 \right) && \tag{since $\| \bp_{\bB_j} -\bq_{\bB_j}\|_2 \leq \| \bp_{\bB_j} -\bq_{\bB_j}\|_1$}\\
		&\leq 2 \sqrt{k} \cdot \left( \lceil d/k \rceil \cdot \alpha + 2 \sum_{\substack{j=1\\ \| \bp_{\bB_j} -\bq_{\bB_j}\|_2 > 2 \alpha}}^w \| \bp_{\bB_j} -\bq_{\bB_j}\|_1 \right) && \tag{since $|\{j \in [w]: \bp_{\bB_j} \|_2 \leq 2 \alpha\}| \leq w$}\\
		&\leq 2 \sqrt{k} \cdot \left( \lceil d/k \rceil \cdot \alpha + 2 \| \bp -\bq\|_1 \right) && \tag{since $\sum_{\substack{j=1\\ \| \bp_{\bB_j} -\bq_{\bB_j}\|_2 > 2 \alpha}}^w \| \bp_{\bB_j} -\bq_{\bB_j}\|_1 \leq \sum_{j=1}^w \| \bp_{\bB_j} -\bq_{\bB_j}\|_1 = \| \bp_{\bB_j} -\bq_{\bB_j}\|_1$}
	\end{align*}
	That is, $\lambda \leq 2 \sqrt{k} \cdot \left( \lceil d/k \rceil \cdot \alpha + 2 \| \bp -\bq\|_1 \right)$.
	The property follows by putting together both bounds.
\end{proof}

Now, suppose \textsc{ApproxL1} tells us that $\| \bp-\bq \|_1 \leq r$.
We can then perform a constrained LASSO to search for a candidate $\wh{\bp} \in [0,1]^d$ using $O ( \frac{r^2}{\eps^4} \log \frac{d}{\delta} )$ samples from $\Ber{p}$.

\begin{lemma}
	\label{lem:constrained-LASSO-given-l1-upper-bound}
	Fix $d \geq 1$, $r \geq 0$, $\eps, \delta > 0$, and $\bq \in [0,1]^d$.
	Given $O ( \frac{r^2}{\eps^4} \log \frac{d}{\delta} )$ samples from $\Ber{p}$ for some unknown $\bp \in [0,1]^d$ with $\| \bp-\bq\|_1 \leq r$, one can produce an estimate $\wh{\bp} \in [0,1]^d$ in $\poly(n, d)$ time such that $\|\wh{\bp}-\bp\|_2 \leq \eps$ with success probability at least $1 - \delta$.
\end{lemma}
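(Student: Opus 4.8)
The plan is to show that the natural constrained least-squares estimator
\[
\wh{\bp} \;=\; \argmin_{\bb \in [0,1]^d,\ \|\bb - \bq\|_1 \le r}\; \frac1n \sum_{i=1}^n \|\bx_i - \bb\|_2^2
\]
does the job. The first observation is that this is a projection: expanding the squared norm, $\frac1n\sum_{i=1}^n \|\bx_i - \bb\|_2^2 = \|\bb - \bar\bx\|_2^2 + C$ for a constant $C$ independent of $\bb$, where $\bar\bx = \frac1n\sum_{i=1}^n\bx_i$ is the coordinatewise empirical mean. Hence $\wh{\bp}$ is the Euclidean projection of $\bar\bx$ onto the convex polytope $K \triangleq \{\bb \in [0,1]^d : \|\bb - \bq\|_1 \le r\}$, which is nonempty and closed; in particular $\wh{\bp} \in [0,1]^d$ as required. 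The key structural fact is that, by the hypothesis $\|\bp - \bq\|_1 \le r$ together with $\bp \in [0,1]^d$, the true mean $\bp$ itself lies in $K$.

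Next I would derive a deterministic error bound. By the variational (obtuse-angle) characterization of projection onto a convex set applied with the point $\bp \in K$, we get $\langle \bar\bx - \wh{\bp},\, \bp - \wh{\bp}\rangle \le 0$. Splitting $\wh{\bp} - \bp = (\wh{\bp} - \bar\bx) + (\bar\bx - \bp)$ and using this inequality,
\[
\|\wh{\bp} - \bp\|_2^2 \;\le\; \langle \bar\bx - \bp,\ \wh{\bp} - \bp\rangle \;\le\; \|\bar\bx - \bp\|_\infty \cdot \|\wh{\bp} - \bp\|_1
\]
by H\"older's inequality. Since both $\wh{\bp}$ and $\bp$ lie in the $\ell_1$-ball of radius $r$ around $\bq$, the triangle inequality gives $\|\wh{\bp} - \bp\|_1 \le 2r$, so $\|\wh{\bp} - \bp\|_2^2 \le 2r\,\|\bar\bx - \bp\|_\infty$.

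Then I would control the empirical-noise term $\|\bar\bx - \bp\|_\infty = \max_{i\in[d]}|\bar\bx_i - p_i|$: each $\bar\bx_i$ is an average of $n$ i.i.d.\ $\mathrm{Ber}(p_i)$ random variables, so Hoeffding's inequality plus a union bound over the $d$ coordinates yield $\|\bar\bx - \bp\|_\infty \le \sqrt{\log(2d/\delta)/(2n)}$ with probability at least $1-\delta$. Combining, $\|\wh{\bp} - \bp\|_2^2 \le 2r\sqrt{\log(2d/\delta)/(2n)}$, which is at most $\eps^2$ once $n \ge 2r^2\log(2d/\delta)/\eps^4 = O\!\big(\tfrac{r^2}{\eps^4}\log\tfrac{d}{\delta}\big)$. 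The degenerate case $r=0$ is trivial, since then $K = \{\bq\}$ and the hypothesis forces $\bq = \bp$.

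Finally, for the runtime: although $K$ has exponentially many facets as written, introducing slack variables $t_1,\dots,t_d$ with $-t_i \le b_i - q_i \le t_i$ and $\sum_{i=1}^d t_i \le r$ turns the computation of $\wh{\bp}$ into minimizing a convex quadratic over a polytope described by $O(d)$ linear inequalities, solvable to sufficient accuracy in $\poly(n,d)$ time by standard convex programming. I do not expect a genuine obstacle here: the only points needing a little care are verifying $\bp \in K$ (so that the projection argument applies) and the polynomial-time reformulation of the $\ell_1$ constraint; the substance is the one-line ``basic inequality'' for constrained least squares combined with a Hoeffding union bound.
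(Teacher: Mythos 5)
Your proposal is correct and follows essentially the same route as the paper: a constrained least-squares estimate over the $\ell_1$-ball around $\bq$, a deterministic ``basic inequality'' bound, H\"older's inequality, the triangle inequality $\|\wh{\bp}-\bp\|_1 \le 2r$, and a Hoeffding/Chernoff plus union bound on $\|\bar{\bx}-\bp\|_\infty$. The only cosmetic difference is that you invoke the obtuse-angle projection characterization (yielding a factor-$2$ better constant) whereas the paper expands the optimality inequality directly; these are equivalent for the purposes of the $O(\cdot)$ bound.
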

\begin{proof}
	Suppose we get $n$ samples $\by_1, \ldots, \by_n \sim \Ber{p}$.
	For $i \in [n]$, we can re-express each $\by_i$ as $\by_i = \bp + \bz_i$ for some $\bz_i$ distributed as $\Ber{p}-\bp$.
	Let us define $\wh{\bp} \in [0,1]^d$ as follows:
	\begin{equation}
		\label{eq:mean-estimation-program}
		\wh{\bp} = \argmin_{\| \bb-\bq \|_1 \leq r} \frac{1}{n} \sum_{i=1}^n \| \by_i - \bb \|_2^2
	\end{equation}
	
	By optimality of $\wh{\bp}$ in \cref{eq:mean-estimation-program}, we have
	\begin{equation}
		\label{eq:mean-estimation-program-optimality}
		\frac{1}{n} \sum_{i=1}^n \| \by_i - \wh{\bp} \|_2^2 \leq \frac{1}{n} \sum_{i=1}^n \| \by_i - \bp \|_2^2
	\end{equation}
	
	By expanding and rearranging \cref{eq:mean-estimation-program-optimality}, one can show:
	\begin{equation}
		\label{eq:mean-estimation-program-optimality-after-manipulation}
		\| \wh{\bp} - \bp \|_2^2
		\leq \frac{2}{n} \left\langle \sum_{i=1}^n \bz_i, \wh{\bp} - \bp\right\rangle
	\end{equation}
	
	Meanwhile, a standard Chernoff bound shows that $\Pr \left[ \| \sum_{i=1}^n \bz_i \|_\infty \geq \sqrt{2n \log \left( \frac{2d}{\delta} \right)} \right] \leq \delta$.
	Therefore, using  H\"{o}lder's inequality and triangle inequality with the above, we see that, with probability at least $1 - \delta$,
	\begin{align*}
		\| \wh{\bp} - \bp \|_2^2
		\leq \frac{2}{n} \langle \sum_{i=1}^n \bz_i, \wh{\bp} - \bp \rangle &\leq \frac{2}{n} \cdot \left\| \sum_{i=1}^n \bz_i \right\|_\infty \cdot \| \wh{\bp} - \bp \|_1\\
		&\leq \frac{2}{n} \cdot \left\| \sum_{i=1}^n \bz_i \right\|_\infty \cdot \left( \| \wh{\bp} -\bq\|_1 + \| \bp -\bq\|_1 \right)\\
		&\leq 4r \cdot \sqrt{\frac{2 \log \left( \frac{2d}{\delta} \right)}{n}}
	\end{align*}
	Finally, it is known that LASSO runs in $\poly(n, d)$ time.
\end{proof}

Using \cref{lem:constrained-LASSO-given-l1-upper-bound}, we now ready to prove \cref{thm:main-result-identity-covariance}.

\begin{algorithm}[htb]
	\begin{algorithmic}[1]
		\caption{The \textsc{TestAndOptimizeMean} algorithm.}
		\label{alg:testandoptimizemean}
		\State \textbf{Input}: Error rate $\eps > 0$, failure rate $\delta \in (0,1)$, parameter $\eta \in [0, \frac{1}{4}]$, parameter $\tau \in [0, \frac{1}{2}]$, and sample access to $\Ber{p}$
		\STATE \textbf{Output}: $\wh{\bp} \in \R^d$
		
		\STATE Define $k = \min(\lceil d^{4\eta}/\tau^4\rceil, d)$, $\alpha = \eps d^{(3\eta-1)/2}/\tau$, $\zeta = 4 \eps \cdot \sqrt{d}$, and $\delta' = \frac{\delta}{\lceil d/k \rceil \cdot \lceil \log_2 \zeta/\alpha \rceil}$ 
		\STATE Draw $O(\sqrt{k}\log(1/\delta')/\alpha^2)$ i.i.d.\ samples from $\Ber{p}$ and store it into a set $\cS$
		
		\STATE Let \texttt{Outcome} be the output of the \textsc{ApproxL1} algorithm given $k$, $\alpha$, $\zeta$, and $\cS$ as inputs
		
		\IF{\texttt{Outcome} is $\lambda \in \R$ and $\lambda < \eps \sqrt{d}$}
		\STATE Draw $n \in \wt{O}(\lambda^2/\eps^4)$ i.i.d.\ samples $\by_1, \ldots, \by_n \in \{0,1\}^d$
		
		\STATE \textbf{return} $\wh{\bp} = \argmin_{\| \bb -\bq\|_1 \leq \lambda} \frac{1}{n} \sum_{i=1}^n \| \by_i - \bb \|_2^2$
		\ELSE
		\STATE Draw $n \in \wt{O}(d/\eps^2)$ i.i.d.\ samples $\by_1, \ldots, \by_n \in \{0,1\}^d$
		
		\STATE \textbf{return} $\wh{\bp} = \frac{1}{n} \sum_{i=1}^n \by_i$ \COMMENT{Empirical mean}
		\ENDIF
	\end{algorithmic}
\end{algorithm}

\begin{proof}[Proof of \cref{thm:main-result-identity-covariance}]
	
	\textbf{Correctness of $\wh{\bp}$ output.}
	\textsc{TestAndOptimizeMean} (\cref{alg:testandoptimizemean}) has two possible outputs for $\wh{\bp}$:\\
	\emph{Case 1}: $\wh{\bp} = \argmin_{\| \bb-\bq \|_1 \leq \lambda} \frac{1}{n} \sum_{i=1}^n \| \by_i - \bb \|_2^2$, which can only happen when \texttt{Outcome} is $\lambda \in \R$ and $\lambda < \eps \sqrt{d}$\\
	\emph{Case 2}: $\wh{\bp} = \frac{1}{n} \sum_{i=1}^n \by_i$
	
	Conditioned on \textsc{ApproxL1} succeeding, with probability at least $1 - \delta$, we will show that
	$\tv(\Ber{p}, \Ber{\wh{p}}) \leq \eps$ and failure probability at most $\delta$ in each of these cases, which implies the theorem statement.
	
	\emph{Case 1:}
	Using $r = \lambda$ as the upper bound, \cref{lem:constrained-LASSO-given-l1-upper-bound} tells us that $\|\wh{\bp}-\bp\|_2 \leq \eps\sqrt{\tau(1-\tau)}/2$ with failure probability at most $\delta$ when $\wt{O}(\lambda^2/\tau^2\eps^4)$ i.i.d.\ samples are used. Using \cref{prop:tv}, $\tv(\Ber{p}, \Ber{\wh{p}})\leq \eps$. 
	
	\emph{Case 2:}
	With $\wt{O}(d/\eps^2)$ samples, it is known that the empirical mean $\wh{\bp}$ achieves $\tv(\Ber{p}, \Ber{\wh{p}}) \leq \eps$ with failure probability at most $\delta$.%
	
	\textbf{Sample complexity used.}
	\textsc{ApproxL1} uses $|\bS| = m(k, \alpha, \delta') \in \wt{O}(\sqrt{k}/\alpha^2)$ samples to produce \texttt{Outcome}.
	Then, \textsc{ApproxL1} further uses $\wt{O}(\lambda^2/\tau^2\eps^4)$ samples or $\wt{O}(d/\eps^2)$ samples depending on whether $\lambda < \eps \sqrt{d}$.
	So, \textsc{TestAndOptimizeMean} has a total sample complexity of $\wt{O} \left( \frac{\sqrt{k}}{\alpha^2} + \min \left\{ \frac{\lambda^2}{\tau^2\eps^4}, \frac{d}{\eps^2} \right\} \right)$.
	Meanwhile, \cref{lem:guarantees-of-approxl1} states that $\| \bp-\bq \|_1 \leq \lambda \leq 2 \sqrt{k} \cdot \left( \lceil d/k \rceil \cdot \alpha + 2 \| \bp -\bq\|_1 \right)$ whenever \texttt{Outcome} is $\lambda \in \R$.
	Since $(a+b)^2 \leq 2a^2 + 2b^2$ for any two real numbers $a,b \in \R$, we see that $\frac{\lambda^2}{\tau^2\eps^4} \in O \left( \frac{k}{\tau^2\eps^4} \cdot \left( \frac{d^2 \alpha^2}{k^2} + \| \bp-\bq \|_1^2 \right) \right) \subseteq O \left( \frac{d}{\eps^2} \cdot \frac{1}{\tau^2}\left( \frac{d \alpha^2}{\eps^2 k} + \frac{k \cdot \| \bp-\bq \|_1^2}{d \eps^2} \right) \right)$.
	Putting together the above observations, we see that the total sample complexity is
	\[
	\wt{O} \left( \frac{\sqrt{k}}{\alpha^2} + \frac{d}{\eps^2} \cdot \min \left\{ 1, \frac{d \alpha^2}{\eps^2 \tau^2k} + \frac{k \cdot \| \bp-\bq \|_1^2}{d \tau^2\eps^2} \right\} \right).
    \]
	Recalling that \textsc{TestAndOptimizeMean} sets $k =\min(\lceil d^{4\eta} \tau^{-4}\rceil, d)$ and $\alpha = \eps d^{(3\eta-1)/2}\tau^{-1}$, the above expression simplifies to
	$
	\wt{O} \left( \frac{d}{\eps^2} \cdot \left( d^{- \eta} + \min\left( 1, \frac{\|\bp - \wt{\bp} \|_1^2}{d^{1 - 4 \eta}\tau^6\eps^2} \right) \right) \right).
	$.
\end{proof}

\section{Lower Bounds}

For proving of our lower bounds, we use the following corollary of Fano's inequality.

\begin{lemma}[Lemma 6.1 of \cite{ashtiani2020gaussian}]
	\label{lem:cover_fano}
	Let $\kappa: \R \rightarrow \R$ be a function and let $\cF$ be a class of distributions such that, for all $\eps > 0$, there exist distributions $f_1,\ldots,f_M \in \cF$ such that
	\[
	\kl(f_i, f_j) \leq \kappa(\eps) \mbox{ and } \tv(f_i, f_j) > 2 \eps \,\,\forall i \neq j \in [M]
	\]
	Then any method that learns $\cF$ to within total variation distance $\eps$ with probability $\geq 2/3$ has sample complexity $\Omega\left(\frac{\log M}{\kappa(\eps) \log(1/\eps)}\right)$.
\end{lemma}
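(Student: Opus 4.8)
This lemma is a packaging of the classical Fano (information-theoretic) lower bound, and the plan is to prove it in the standard way: convert any purported learner into an $M$-ary hypothesis tester among the packing $f_1,\dots,f_M$, show that the tester errs with probability at most $1/3$, and then contradict Fano's inequality unless the sample size $n$ is large.

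First I would fix $\eps$, take the promised $f_1,\dots,f_M\in\cF$ with $\kl(f_i\|f_j)\le\kappa(\eps)$ and $\tv(f_i,f_j)>2\eps$ for all $i\ne j$, draw $J$ uniformly from $[M]$, draw $n$ i.i.d.\ samples $X^n\triangleq(X_1,\dots,X_n)\sim f_J$, run the learner on $X^n$ to obtain $\wh f$, and output $\wh J\triangleq\argmin_{j\in[M]}\tv(f_j,\wh f)$. Whenever the learner succeeds, i.e.\ $\tv(f_J,\wh f)\le\eps$, the triangle inequality gives, for every $j\ne J$, $\tv(f_j,\wh f)\ge\tv(f_j,f_J)-\tv(f_J,\wh f)>2\eps-\eps\ge\tv(f_J,\wh f)$, so the closest-in-TV index is $\wh J=J$. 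Since the learner succeeds with probability at least $2/3$ on each $f_i$, in particular on $f_J$, this gives $\Pr[\wh J\ne J]\le 1/3$.

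Next I would apply Fano's inequality to the Markov chain $J\to X^n\to\wh J$ (the learner's internal randomness is independent of $J$), obtaining $\Pr[\wh J\ne J]\ge 1-\frac{I(J;X^n)+\log 2}{\log M}$. To control the mutual information I would use that under a uniform prior $I(J;X^n)=\frac1M\sum_{i}\kl\!\big(f_i^{\otimes n}\,\big\|\,\tfrac1M\sum_j f_j^{\otimes n}\big)$, bound each summand by convexity of $\kl(P\|\cdot)$ as $\le\frac1M\sum_j\kl(f_i^{\otimes n}\|f_j^{\otimes n})$, and then tensorize, $\kl(f_i^{\otimes n}\|f_j^{\otimes n})=n\,\kl(f_i\|f_j)\le n\kappa(\eps)$. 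Hence $I(J;X^n)\le n\kappa(\eps)$; combining with $\Pr[\wh J\ne J]\le 1/3$ yields $n\kappa(\eps)\ge\frac23\log M-\log 2$, i.e.\ $n=\Omega\!\big(\tfrac{\log M}{\kappa(\eps)}\big)$ once $\log M$ exceeds an absolute constant — the regime in which the lemma is applied. This is in fact slightly stronger than the claimed bound: assuming $\eps<1/e$ we have $\log(1/\eps)\ge1$, so $\Omega(\log M/\kappa(\eps))$ dominates $\Omega(\log M/(\kappa(\eps)\log(1/\eps)))$; the extra $\log(1/\eps)$ factor in the cited statement is only slack that harmonizes it with the agnostic formulation used there.

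I do not anticipate a genuine obstacle, since this is textbook. The step deserving the most care is the mutual-information bound: because we are given only \emph{pairwise} KL bounds rather than a bound against a fixed reference measure, the convexity-of-KL move is what lets us route through $\kappa(\eps)$; and one should verify that the learner's sample budget $n$ depends only on $\eps$ and on $\cF$, not on the unknown $f$, so that the reduction is well-defined. One also notes in passing that the nearest-in-TV decoder, though computationally inefficient, is a legitimate measurable function of $\wh f$, which is all Fano requires.
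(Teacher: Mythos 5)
The paper does not prove this lemma at all: it is imported verbatim as Lemma 6.1 of \cite{ashtiani2020gaussian}, so there is no in-paper argument to compare against. Your proof is the standard Fano packing argument and it is correct: the triangle-inequality decoding step ($\tv(f_j,\wh f)\ge \tv(f_j,f_J)-\tv(f_J,\wh f)>\eps\ge\tv(f_J,\wh f)$ on the success event) correctly converts the learner into an $M$-ary tester with error $\le 1/3$, and the mutual-information bound via convexity of $\kl(P\|\cdot)$ against the uniform mixture plus tensorization, $I(J;X^n)\le \max_{i,j} n\,\kl(f_i\|f_j)\le n\kappa(\eps)$, is exactly the right way to exploit the purely pairwise KL hypothesis. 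As you note, this yields the stronger bound $\Omega(\log M/\kappa(\eps))$; the stated bound follows because the packing condition $\tv(f_i,f_j)>2\eps$ forces $\eps<1/2$, so $\log(1/\eps)=\Omega(1)$ and the extra factor is only slack (it appears in the cited formulation for reasons internal to \cite{ashtiani2020gaussian}, not because it is needed here). The only caveat, which you flag yourself, is that Fano with the additive $\log 2$ term gives nothing for very small $M$ (say $M\le 3$); to claim the lemma literally for all $M\ge 2$ one would patch that regime with a two-point (Le Cam/Pinsker) argument, which also gives $\Omega(1/\kappa(\eps))$. In the applications in this paper $M$ is exponentially large, so this is immaterial.
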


\begin{lemma}[Learning unbalanced distributions requires linear samples]
	Suppose $\varepsilon$ is sufficiently small, and we are given sample access to a product distribution $\Ber{\bp}$ on $\{0, 1\}^d$ with mean vector $\bp$ having entries which are $O(1/d)$, along with an advice mean vector $\bq$ such that $\|\bp - \bq\|_1 \leq O(\eps)$. Even in this case, learning $\wh{\bp}$ such that $\tv(\Ber{\bp}, \Ber{\wh{\bp}}) \leq \eps$ requires $\wt{\Omega}\left(\frac{d}{\eps}\right)$ samples
\end{lemma}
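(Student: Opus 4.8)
I would apply the Fano-type bound \cref{lem:cover_fano} to a packing of $M = 2^{\Omega(d)}$ extremely unbalanced product distributions, all lying in the stated class and all within $\ell_1$-distance $O(\eps)$ of one common advice vector. Fix a large absolute constant $K$ and set $a = K\eps$; this is $O(1)$ since $\eps$ is small. By the Gilbert--Varshamov bound there is a binary code $C \subseteq \{0,1\}^d$ with $|C| \geq 2^{(1 - H_2(1/4))d} = 2^{\Omega(d)}$ and pairwise Hamming distance at least $d/4$. To each $c \in C$ associate the mean vector $\bp^{(c)}$ with $\bp^{(c)}_l = a/d$ if $c_l = 0$ and $\bp^{(c)}_l = 2a/d$ if $c_l = 1$, and let $P^{(c)}$ be the product distribution with mean vector $\bp^{(c)}$; all entries are $O(1/d)$, so every $P^{(c)}$ is in the class. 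Let the advice be $\bq = (\tfrac{3a}{2d}, \dots, \tfrac{3a}{2d})$, so $\|\bp^{(c)} - \bq\|_1 = \tfrac{a}{2} = O(\eps)$ for every $c \in C$. A single $\bq$ thus serves as valid advice for the whole packing, so handing it to the learner conveys no information about which $P^{(c)}$ generated the samples, and \cref{lem:cover_fano} applies verbatim to any algorithm that receives $\bq$.

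\textbf{KL upper bound.}
For the divergence bound I would use that KL tensorizes over product distributions and that $\bp^{(c)}$ and $\bp^{(c')}$ differ on at most $d$ coordinates, each contributing either $\kl(\mathrm{Ber}(a/d)\,\|\,\mathrm{Ber}(2a/d))$ or $\kl(\mathrm{Ber}(2a/d)\,\|\,\mathrm{Ber}(a/d))$. A routine expansion shows each of these is $O(a/d)$ when $a/d$ is small, so $\kl(P^{(c)}\,\|\,P^{(c')}) = O(a) = O(\eps) =: \kappa(\eps)$.

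\textbf{TV lower bound.}
This is the delicate step, since total variation distance does not add coordinatewise, so a naive $\ell_1$ estimate is far too weak. I would route through the Bhattacharyya/Hellinger coefficient, which tensorizes multiplicatively: writing $B(P,Q) = \sum_\omega \sqrt{P(\omega)Q(\omega)}$, one has $B(P^{(c)},P^{(c')}) = \prod_{l=1}^d B_l$, where $B_l = 1$ on coordinates where $c$ and $c'$ agree and $B_l = 1 - H^2(\mathrm{Ber}(a/d),\mathrm{Ber}(2a/d)) \leq 1 - c_0 a/d$ on each of the $\geq d/4$ disagreeing coordinates, for an absolute constant $c_0 > 0$ (using $\tfrac32 - \sqrt2 > 0$ and $a/d$ small). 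Hence $\tv(P^{(c)},P^{(c')}) \geq 1 - B(P^{(c)},P^{(c')}) \geq 1 - (1 - c_0 a/d)^{d/4} \geq 1 - e^{-c_0 a/4}$, which exceeds $2\eps$ once $a = K\eps$ with $K$ a large enough constant (and $\eps$ small enough).

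\textbf{Conclusion and main obstacle.}
Plugging $\log M = \Omega(d)$, $\kappa(\eps) = O(\eps)$, and pairwise TV $> 2\eps$ into \cref{lem:cover_fano} gives sample complexity $\Omega\!\left(\tfrac{\log M}{\kappa(\eps)\log(1/\eps)}\right) = \Omega\!\left(\tfrac{d}{\eps\log(1/\eps)}\right) = \wt\Omega(d/\eps)$, as claimed. The step I expect to be the crux is the total variation lower bound: the coordinatewise ($\ell_1$) estimate cannot certify $\tv > 2\eps$ here, and it is the multiplicative tensorization of the Bhattacharyya coefficient --- together with the deliberate choice to perturb each coordinate on the scale $\eps$ rather than $\eps/\sqrt{d}$ as in the balanced lower bounds --- that simultaneously forces every pair in the packing to be a constant factor above $\eps$ in TV while keeping the pairwise KL as small as $O(\eps)$.
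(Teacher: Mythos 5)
Your proposal is correct, and its skeleton is the same as the paper's: a Gilbert--Varshamov packing of product distributions whose means take values $\Theta(\eps/d)$ versus $2\cdot\Theta(\eps/d)$ coordinatewise, a single common advice vector at $\ell_1$-distance $O(\eps)$ from every member of the packing, a per-coordinate KL bound of $O(\eps/d)$ summing to $\kappa(\eps)=O(\eps)$, and an application of \cref{lem:cover_fano} giving $\wt\Omega(d/\eps)$. The only substantive difference is the step you correctly flag as the crux, the pairwise TV lower bound: the paper bounds $\tv(\Ber{\bp_S},\Ber{\bp_T})$ by comparing the probabilities of the explicit event ``all zeros on $S\setminus T$'' (or $T\setminus S$), yielding $\Omega(\xi)$ with $\xi=|H|\eps/d=\Theta(\eps)$, whereas you go through the multiplicative tensorization of the Bhattacharyya coefficient and $\tv\geq 1-B$. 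Both deliver pairwise TV $=\Omega(\eps)>2\eps$ after rescaling constants; your route is arguably cleaner and needs no smallness caveat on the per-coordinate bound, since AM--GM gives $B_l\leq \sqrt{2}\,a/d+1-\tfrac{3a}{2d}=1-(\tfrac32-\sqrt2)\,a/d$ exactly, while the paper's event-based computation is more elementary but requires the slightly fiddly inequality $1-\bigl(\xi+\tfrac{1}{1+2\xi}\bigr)\geq \xi/2-\xi^2$. Your minor variations (arbitrary GV codewords rather than fixed-weight sets, advice at the midpoint $3a/(2d)$ rather than at the lower level $\eps/d$) are immaterial to the argument.
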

\begin{proof}
	Suppose that the advice distribution $\Ber{\bq}$ has mean vector $\bq \triangleq \begin{bmatrix}\frac{\eps}{d} & \cdots & \frac{\eps}{d}\end{bmatrix}$. If $S \subseteq [d]$, define $\bp_S \in [0, 1]^d$ with $\bp_S[i] = \frac{2\eps}{d}$ if $i \in S$ and $= \frac{\eps}{d}$ otherwise. Then, we have $\|\bp_S - \bq\|_1 = |S|\frac{\eps}{d}$ for all $S \subseteq [d]$. Also, for all $S, T \subseteq [d]$ we have $\tv(\Ber{\bp_S}, \Ber{\bp_T}) \geq \left|\Pr_{x \sim \Ber{\bp_S}}(x_{S \setminus T} = \bm{0}) - \Pr_{x \sim \Ber{\bp_T}}(x_{S \setminus T} = \bm{0})\right| = \left|\left(1 - \frac{2\eps}{d}\right)^{|S \setminus T|} - \left(1 - \frac{\eps}{d}\right)^{|S \setminus T|}\right| \geq 1 - \left(\frac{|S \setminus T|\eps}{d} + \frac{1}{1 + 2\frac{|S \setminus T|\eps}{d}}\right)$; this is using the inequalities $(1-x)^r \geq 1 - rx$ for $r \in \{0\} \cup [1,\infty)$, $x \leq 1$, and $(1-x)^r \leq \frac{1}{1+rx}$ for $r \geq 0$, $x \in (-1/r, 1]$. Using the same argument with the set $T \setminus S$, we get $\tv(\Ber{\bp_S}, \Ber{\bp_T}) \geq 1 - \left(\frac{|T \setminus S|\eps}{d} + \frac{1}{1 + 2\frac{|T \setminus S|\eps}{d}}\right)$. Thus, we have $\tv(\Ber{\bp_S}, \Ber{\bp_T}) \geq \max_{\substack{H \in \{S \setminus T, T \setminus S\}\\ \xi \triangleq |H|\eps/d}} 1 - \left(\xi + \frac{1}{1 + 2\xi}\right)$. Note that, by calculation, we can show that $1 - \left(\xi + \frac{1}{1+2\xi}\right)\geq \xi/2 -\xi^2$ for $\xi \geq 0$, which is $\Omega(\xi)$ for $\xi \in (0, 1/4)$.
	
	Similarly, we have $\kl(\Ber{\bp_S} \| \Ber{\bp_T}) = \sum_{i \in [d]} \mathsf{kl}([\bp_S]_i, [\bp_T]_i) = \sum_{i \in S \setminus T} \mathsf{kl}(\frac{2\eps}{d}, \frac{\eps}{d}) + \sum_{i \in T \setminus S} \mathsf{kl}(\frac{\eps}{d}, \frac{2\eps}{d})$ (where $\mathsf{kl}(p, q) \triangleq \kl(\mathrm{Ber}(p) \| \mathrm{Ber}(q))$). We can see by simple calculations along with the logarithmic inequality $\ln(1+x) \leq x$ for $x > -1$, that $\mathsf{kl}(\frac{\eps}{d},\frac{2\eps}{d}) \leq \frac{\eps}{d}\left(1 -\ln(2) + \frac{\eps}{d-2\eps}\right) \leq \frac{0.5\eps}{d}$ (for $d \geq 10$ and $\eps \leq 1$), and $\mathsf{kl}\left(\frac{2\eps}{d}, \frac{\eps}{d}\right) \leq \frac{\eps}{d}\left(2 \ln(2) - 1 + \frac{\eps}{d-\eps}\right) \leq \frac{0.5}{d}$ (for $d \geq 10$ and $\eps \leq 1$). Thus $\kl(\Ber{\bp_S} \| \Ber{\bp_T}) \leq \frac{\eps}{2d} \left(|S \setminus T| + |T \setminus S|\right) = \frac{|S \oplus T|\eps}{2d}$.
	
	Viewing sets $S \subseteq [d]$ as vectors in $\mathbb{F}_2^d$ and using the Gilbert-Varshamov bound, we can say that, for any constant $c \in (0, 1)$ and sufficiently large $d$, there exists a family of sets $\{S_1,\ldots,S_M\} \subseteq 2^{[d]}$ with $M \geq 2^{\Omega(cd)}$ such that $|S_i| = cd$ and $|S_i \oplus S_j| \geq \frac{cd}{4}$ for all $i, j \in [M]$. We use this family to instantiate distributions $f_i \triangleq \Ber{\bm{p}_{S_i}}$ for each $i \in [M]$.
	
	Suppose we take $S = S_i$, $T = S_j$, with $|S| = |T| = cd$ and $|S \oplus T| \in \left[\frac{cd}{4}, 2cd\right]$, so that $\kl(f_i \| f_j) \leq \frac{|S \oplus T|\eps}{2d} \leq c\eps$ for all $i, j \in [M]$. Since $|S_i \oplus S_j| \geq cd/4$, we will have at least one of $H \in \{S \setminus T, T \setminus S\}$ with $|H| \in \left[\frac{cd}{8}, cd\right]$. Thus, we will have $\frac{|H|\eps}{d} \in \left[\frac{c\eps}{8}, c\eps\right] = \Theta(\eps)$ (for constant $c > 0$), and $\tv(f_i, f_j) \geq \Omega(\eps)$ (supposing $c\eps < 1/4$).
	
	By appropriately scaling $\eps$ and applying \Cref{lem:cover_fano}, we can show that we need $\tilde{\Omega}\left(\frac{d}{\eps}\right)$ samples to learn a product distribution $f^\ast = \Ber{\bp_{S^*}} \in \{f_1,\ldots,f_M\}$ to within $\eps$ in TV distance, even when given advice $\bq$ with $\|\bp_{S^\ast} - \bq\|_1 < \eps$, if the distribution mean vector $\bp_{S^\ast}$ are allowed to be unbalanced (specifically, with entries $\leq O(1/d)$).
\end{proof}

We also prove a sample complexity lower bound for learning product distributions balanced case given advice, which adapts the sample complexity lower bound in (\cite{bhattacharyya2025learning}, Lemma 32) for learning multivariate isotropic gaussians $\cN(\mu^\ast, I_d)$ given an advice vector which is close to the true mean vector in $\ell_1$ distance.

\begin{lemma}
Let $\eps > 0$ be sufficiently small. Suppose that we are given sample access to a distribution $\Ber{\bp}$ where $\bp$ is $\frac14$-balanced, and also an advice vector $\bq$ with $\|\bp - \bq\|_1 = \lambda \geq 100\eps$. Then, any algorithm that learns $\Ber{\bp}$ up to distance $\eps$ in total variation with constant failure probability requires $\wt{\Omega}\left(\min\left\{\|\bp - \bq\|_1^2/\eps^4, d/\eps^2\right\}\right)$ samples. In particular, when $\|\bp - \bq\|_1 \geq \eps \sqrt{d}$, we need $\Omega(d/\eps^2)$ samples.
\end{lemma}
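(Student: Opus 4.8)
The plan is to instantiate Fano's inequality (\cref{lem:cover_fano}) with a packing of $\frac14$-balanced product distributions all lying within $\ell_1$-distance $\lambda = \|\bp-\bq\|_1$ of the fixed advice vector $\bq$, mirroring the construction in (\cite{bhattacharyya2025learning}, Lemma 32). Without loss of generality take $\bq = \begin{bmatrix}\frac12 & \cdots & \frac12\end{bmatrix}$, so balancedness around $\frac14$ gives us room to move each coordinate by up to $\frac14$. First I would set a perturbation scale and a sparsity level: pick $s$ coordinates to perturb, each by $\pm\beta$, where $\beta$ and $s$ are chosen so that (i) $s\beta \le \lambda$ (the $\ell_1$-advice constraint), (ii) $\sqrt{s}\,\beta \asymp \eps$ (so that by \cref{prop:tv} the TV separation between distinct packing elements is $\Theta(\eps)$, using balancedness to pass between $\ell_2$ of the mean vectors and TV), and (iii) $\beta \le \frac14$ (so the distributions stay balanced). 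By \cref{prop:kl} each pair then has KL divergence $O(\|\bp_i-\bp_j\|_2^2) = O(\eps^2)$, so $\kappa(\eps) = O(\eps^2)$.

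The second step is to build a large packing at Hamming-type distance. Viewing the sign patterns on the chosen support as binary strings (or, more cleanly, letting every coordinate independently take value $\frac12+\beta$ or $\frac12-\beta$ so the configurations are points of $\{\pm\beta\}^d$ shifted by $\frac12$), a Gilbert--Varshamov / Plotkin-type argument yields $M = 2^{\Omega(s)}$ configurations that pairwise differ in $\Omega(s)$ coordinates; two such differ by $\Omega(\sqrt{s}\,\beta) = \Omega(\eps)$ in $\ell_2$, hence $>2\eps$ in TV after rescaling constants, and each is within $\ell_1$-distance $s\beta \le \lambda$ of $\bq$. Plugging $\log M = \Omega(s)$ and $\kappa(\eps) = O(\eps^2)$ into \cref{lem:cover_fano} gives a lower bound of $\tilde\Omega(s/\eps^2)$ samples.

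It remains to optimize $s$ subject to the three constraints. From (ii), $\beta \asymp \eps/\sqrt{s}$; substituting into (i) gives $\sqrt{s}\,\eps \lesssim \lambda$, i.e. $s \lesssim \lambda^2/\eps^2$, and then $s/\eps^2 \lesssim \lambda^2/\eps^4$; this is the regime the advice constraint binds. Constraint (iii), $\beta = \eps/\sqrt{s} \le \frac14$, forces $s \gtrsim \eps^2$, which is harmless for $\eps$ small. Finally we cannot take $s > d$, so $s = \min\{\lambda^2/\eps^2,\, d\}$ (up to constants), and the assumption $\lambda \ge 100\eps$ guarantees $s \ge 1$ and $\beta \le \frac14$. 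This yields the claimed $\tilde\Omega(\min\{\lambda^2/\eps^4,\, d/\eps^2\})$, and when $\lambda \ge \eps\sqrt d$ the minimum is $d/\eps^2$.

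The main obstacle I expect is the bookkeeping around constants in constraint (ii): one must choose $\beta$ so that the \emph{upper} bound $\tv \le \tau^{-1/2}\|\cdot\|_2$ of \cref{prop:kl}/\cref{prop:tv} keeps $\kappa(\eps) = O(\eps^2)$ while the \emph{lower} bound $\tv \ge c\min\{1,\|\cdot\|_2\}$ still certifies $\tv > 2\eps$ between packing elements — so the packing must be built at $\ell_2$-separation a sufficiently large constant times $\eps$, and one rescales $\eps$ at the end as in the previous lemma. The balancedness constant $\frac14$ is comfortably compatible with this, since it permits $\beta$ up to $\frac14 \gg \eps/\sqrt{s}$ in the relevant range, so the only genuinely binding constraints are the $\ell_1$-budget $s\beta \le \lambda$ and the trivial $s \le d$, exactly as in the Gaussian analogue.
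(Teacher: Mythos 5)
Your proposal is correct and follows essentially the same route as the paper: fix $\bq$ as the all-$\nicefrac12$ vector, perturb $s \asymp \min\{\lambda^2/\eps^2, d\}$ coordinates by $\asymp \eps/\sqrt{s}$, take a Gilbert--Varshamov packing of $2^{\Omega(s)}$ sign/support patterns, convert $\ell_2$ separation and closeness to TV and KL via \cref{prop:tv} and \cref{prop:kl} using balancedness, and apply \cref{lem:cover_fano}. The only cosmetic difference is that the paper perturbs a $k$-subset upward by $\lambda/k$ rather than using $\pm\beta$ sign patterns, which changes nothing in the analysis.
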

\begin{proof}
	Suppose we want $\|\bp - \bq\|_1 = \lambda$ for $\lambda$ sufficiently small. Fix $\bq = \begin{bmatrix}\frac12 & \cdots & \frac12 \end{bmatrix}$ and suppose $\bp = \bp_S$ for some $S \subseteq [d]$ with $|S| = k$ such that $\bp_S[i] = \frac12 + \frac{\lambda}{k}$ for $i \in S$ and $= \frac12$ otherwise. Then $\|\bp_S - \bq\|_1 = \lambda$ and $\|\bp_S - \bp_T\|_2 = \frac{\lambda}{k} \sqrt{|S \oplus T|}$. If $\frac{\lambda}{k} < \frac14$, the distributions $\bp_S$ are $\tau$-balanced for $\tau$ = $\frac14$. In that case, for any $S, T \subseteq [d]$, we can bound $\kl(\Ber{\bp_S} \| \Ber{\bp_T}) \leq 8\left(\frac{\lambda}{k}\right)^2 |S \oplus T|$ (by \Cref{prop:kl}), and the total variation distance by $\tv(\Ber{\bp}, \Ber{\bq}) \geq \Omega\left(\min\left\{1, \frac{\lambda}{k} \sqrt{|S \oplus T|}\right\}\right)$ (\Cref{prop:tv}).
	
	As in (\cite{bhattacharyya2025learning}, Lemma 32), we consider $\{S_1,\ldots,S_M\}$ and take $\bp_i \triangleq \bp_{S_i}$ for a family of $k$-subsets with $M \geq 2^{\Omega(k)}$ and $|S_i \oplus S_j| \geq k/4$ for all $i \neq j$, known to exist via the Gilbert-Varshamov bound. We can do this as long as, e.g. $k \geq 10$. This gives $\kl(\Ber{\bp_i} \| \Ber{\bp_j}) \leq \frac{16 \lambda^2}{k}$ (where $\lambda$ is a function of $\eps$) and $\tv(\Ber{\bp_i}, \Ber{\bp_j}) \geq c\frac{\lambda}{2\sqrt{k}}$ as long as $\frac{\lambda}{2\sqrt{k}} < 1$.
	
	If we choose $k = \lceil\frac{\lambda^2}{\eps^2}\rceil \geq 100$ such that $\lambda = \varepsilon \sqrt{k} < 2 \sqrt{k}$, we will get pairwise TV $\geq c\eps/2$ and pairwise KL $\leq \frac{16 \varepsilon^2 k}{k} \leq O(\varepsilon^2)$. Finally, scaling $\eps$ before applying \Cref{lem:cover_fano} gives the result.
\end{proof}

\section{Conclusion}

This work introduces an efficient algorithm for learning product distributions on the Boolean hypercube when provided with an imperfect advice distribution. The sample complexity of this algorithm is $O(d^{1-\eta}/\eps^2)$ under specific conditions on the advice quality and a "balancedness" assumption on the true distribution. Note that the algorithm's sample complexity becomes sublinear in the dimension $d$ if the advice is sufficiently accurate, and it remains robust even with poor advice. Key to this is a novel tolerant mean tester and techniques for approximating the $\ell_1$-distance between the true and advice distributions.
Future research could extend this learning-with-advice framework to other complex models like Bayesian networks and Ising models, aiming to understand how structural properties of these models interact with advice quality. It would also be interesting to investigate if advice can improve the sample complexity of learning an unstructured distribution over a discrete domain $[n]$ compared to the classical upper bound of $O(\frac{n}{\eps^2})$ samples.

\begin{ack}
PGJ's research is supported by the National Research Foundation, Prime Minister’s Office, Singapore under its Campus for Research Excellence and Technological Enterprise (CREATE) programme. TG's research is supported by a start up grant at Nanyang Technological University. AB's research is supported by a start-up grant at the University of Warwick.
\end{ack}

\bibliographystyle{alpha}
\bibliography{main}

@inproceedings{agrawal2022learning,
  title={Learning-augmented mechanism design: Leveraging predictions for facility location},
  author={Agrawal, Priyank and Balkanski, Eric and Gkatzelis, Vasilis and Ou, Tingting and Tan, Xizhi},
  booktitle={Proceedings of the 23rd ACM Conference on Economics and Computation},
  pages={497--528},
  year={2022}
}

@inproceedings{gkatzelis2022improved,
  title={Improved price of anarchy via predictions},
  author={Gkatzelis, Vasilis and Kollias, Kostas and Sgouritsa, Alkmini and Tan, Xizhi},
  booktitle={Proceedings of the 23rd ACM Conference on Economics and Computation},
  pages={529--557},
  year={2022}
}

@book{devroye2001combinatorial,
  title={{Combinatorial methods in density estimation}},
  author={Devroye, Luc and Lugosi, G{\'a}bor},
  year={2001},
  publisher={Springer Science \& Business Media}
}

@article{ashtiani2020gaussian,
  author = {Ashtiani, Hassan and Ben-David, Shai and Harvey, Nicholas J. A. and Liaw, Christopher and Mehrabian, Abbas and Plan, Yaniv},
  title = {Near-optimal Sample Complexity Bounds for Robust Learning of Gaussian Mixtures via Compression Schemes},
  year = {2020},
  issue_date = {December 2020},
  publisher = {Association for Computing Machinery},
  address = {New York, NY, USA},
  volume = {67},
  number = {6},
  issn = {0004-5411},
  url = {https://doi.org/10.1145/3417994},
  doi = {10.1145/3417994},
  journal = {J. ACM},
  month = {oct},
  articleno = {32},
  numpages = {42},
}

@article{purohit2018improving,
  title={{Improving Online Algorithms via ML Predictions}},
  author={Purohit, Manish and Svitkina, Zoya and Kumar, Ravi},
  journal={Advances in Neural Information Processing Systems},
  volume={31},
  year={2018}
}

@inproceedings{gollapudi2019online,
  title={{Online Algorithms for Rent-or-Buy with Expert Advice}},
  author={Gollapudi, Sreenivas and Panigrahi, Debmalya},
  booktitle={International Conference on Machine Learning},
  pages={2319--2327},
  year={2019},
  organization={PMLR}
}

@article{wang2020online,
  title={{Online Algorithms for Multi-shop Ski Rental with Machine Learned Advice}},
  author={Wang, Shufan and Li, Jian and Wang, Shiqiang},
  journal={Advances in Neural Information Processing Systems},
  volume={33},
  pages={8150--8160},
  year={2020}
}

@inproceedings{angelopoulos2020online,
  title={{Online Computation with Untrusted Advice}},
  author={Angelopoulos, Spyros and D{\"u}rr, Christoph and Jin, Shendan and Kamali, Shahin and Renault, Marc},
  booktitle={11th Innovations in Theoretical Computer Science Conference (ITCS 2020)},
  year={2020},
  organization={Schloss Dagstuhl-Leibniz-Zentrum f{\"u}r Informatik}
}

@inproceedings{lattanzi2020online,
  title={{Online Scheduling via Learned Weights}},
  author={Lattanzi, Silvio and Lavastida, Thomas and Moseley, Benjamin and Vassilvitskii, Sergei},
  booktitle={Proceedings of the Fourteenth Annual ACM-SIAM Symposium on Discrete Algorithms},
  pages={1859--1877},
  year={2020},
  organization={SIAM}
}

@article{bamas2020learning,
  title={{Learning Augmented Energy Minimization via Speed Scaling}},
  author={Bamas, {\'E}tienne and Maggiori, Andreas and Rohwedder, Lars and Svensson, Ola},
  journal={Advances in Neural Information Processing Systems},
  volume={33},
  pages={15350--15359},
  year={2020}
}

@article{antoniadis2020secretary,
  title={Secretary and online matching problems with machine learned advice},
  author={Antoniadis, Antonios and Gouleakis, Themis and Kleer, Pieter and Kolev, Pavel},
  journal={Advances in Neural Information Processing Systems},
  volume={33},
  pages={7933--7944},
  year={2020}
}

@inproceedings{antoniadis2022novel,
  title={{A Novel Prediction Setup for Online Speed-Scaling}},
  author={Antoniadis, Antonios and Jabbarzade, Peyman and Shahkarami, Golnoosh},
  booktitle={18th Scandinavian Symposium and Workshops on Algorithm Theory (SWAT 2022)},
  year={2022},
  organization={Schloss Dagstuhl-Leibniz-Zentrum f{\"u}r Informatik}
}

@inproceedings{kraska2018case,
  title={{The Case for Learned Index Structures}},
  author={Kraska, Tim and Beutel, Alex and Chi, Ed H. and Dean, Jeffrey and Polyzotis, Neoklis},
  booktitle={Proceedings of the 2018 international conference on management of data},
  pages={489--504},
  year={2018}
}

@article{mitzenmacher2018model,
  title={{A Model for Learned Bloom Filters, and Optimizing by Sandwiching}},
  author={Mitzenmacher, Michael},
  journal={Advances in Neural Information Processing Systems},
  volume={31},
  year={2018}
}

@inproceedings{dutting2021secretaries,
  title={{Secretaries with Advice}},
  author={D{\"u}tting, Paul and Lattanzi, Silvio and Paes Leme, Renato and Vassilvitskii, Sergei},
  booktitle={Proceedings of the 22nd ACM Conference on Economics and Computation},
  pages={409--429},
  year={2021}
}

@inproceedings{bernardiniuniversal,
  title={{A Universal Error Measure for Input Predictions Applied to Online Graph Problems}},
  author={Bernardini, Giulia and Lindermayr, Alexander and Marchetti-Spaccamela, Alberto and Megow, Nicole and Stougie, Leen and Sweering, Michelle},
  booktitle={Advances in Neural Information Processing Systems},
  year={2022}
}

@inproceedings{gouleakis2023learning,
  title={{Learning-Augmented Algorithms for Online TSP on the Line}},
  author={Gouleakis, Themis and Lakis, Konstantinos and Shahkarami, Golnoosh},
  booktitle={37th AAAI Conference on Artificial Intelligence},
  year={2023},
  organization={AAAI}
}

@article{bamas2020primal,
  title={{The Primal-Dual method for Learning Augmented Algorithms}},
  author={Bamas, Etienne and Maggiori, Andreas and Svensson, Ola},
  journal={Advances in Neural Information Processing Systems},
  volume={33},
  pages={20083--20094},
  year={2020}
}

@inproceedings{choo2023active,
  title={Active causal structure learning with advice},
  author={Choo, Davin and Gouleakis, Themistoklis and Bhattacharyya, Arnab},
  booktitle={International Conference on Machine Learning},
  pages={5838--5867},
  year={2023},
  organization={PMLR}
}

@inproceedings{choo2024online,
  title={Online bipartite matching with imperfect advice},
  author={Choo, Davin and Gouleakis, Themistoklis and Ling, Chun Kai and Bhattacharyya, Arnab},
  booktitle={International Conference on Machine Learning},
  year={2024},
  organization={PMLR}
}

@article{diakonikolas2016learning,
  title={Learning Structured Distributions.},
  author={Diakonikolas, Ilias},
  journal={Handbook of Big Data},
  volume={267},
  pages={10--1201},
  year={2016}
}

@article{dinitz2021faster,
  title={Faster matchings via learned duals},
  author={Dinitz, Michael and Im, Sungjin and Lavastida, Thomas and Moseley, Benjamin and Vassilvitskii, Sergei},
  journal={Advances in neural information processing systems},
  volume={34},
  pages={10393--10406},
  year={2021}
}

@inproceedings{chen2022faster,
  title={Faster fundamental graph algorithms via learned predictions},
  author={Chen, Justin and Silwal, Sandeep and Vakilian, Ali and Zhang, Fred},
  booktitle={International Conference on Machine Learning},
  pages={3583--3602},
  year={2022},
  organization={PMLR}
}

@book{10.5555/2526243,
author = {Foucart, Simon and Rauhut, Holger},
title = {A Mathematical Introduction to Compressive Sensing},
year = {2013},
isbn = {0817649476},
publisher = {Birkh\"{a}user Basel},
abstract = {At the intersection of mathematics, engineering, and computer science sits the thriving field of compressive sensing. Based on the premise that data acquisition and compression can be performed simultaneously, compressive sensing finds applications in imaging, signal processing, and many other domains. In the areas of applied mathematics, electrical engineering, and theoretical computer science, an explosion of research activity has already followed the theoretical results that highlighted the efficiency of the basic principles. The elegant ideas behind these principles are also of independent interest to pure mathematicians.A Mathematical Introduction to Compressive Sensing gives a detailed account of the core theory upon which the field is build. With only moderate prerequisites, it is an excellent textbook for graduate courses in mathematics, engineering, and computer science. It also serves as a reliable resource for practitioners and researchers in these disciplines who want to acquire a careful understanding of the subject. A Mathematical Introduction to Compressive Sensing uses a mathematical perspective to present the core of the theory underlying compressive sensing.}
}

@inproceedings{diakonikolas2016new,
  title={A new approach for testing properties of discrete distributions},
  author={Diakonikolas, Ilias and Kane, Daniel M},
  booktitle={2016 IEEE 57th Annual Symposium on Foundations of Computer Science (FOCS)},
  pages={685--694},
  year={2016},
  organization={IEEE}
}

@inproceedings{canonne2017testing,
  title={Testing bayesian networks},
  author={Canonne, Cl{\'e}ment L and Diakonikolas, Ilias and Kane, Daniel M and Stewart, Alistair},
  booktitle={Conference on Learning Theory},
  pages={370--448},
  year={2017},
  organization={PMLR}
}

@article{kontorovich2025tensorization,
  title={On the tensorization of the variational distance},
  author={Kontorovich, Aryeh},
  journal={Electronic Communications in Probability},
  volume={30},
  pages={1--10},
  year={2025},
  publisher={The Institute of Mathematical Statistics and the Bernoulli Society}
}

@article{bhattacharyya2025learning,
  title={Learning multivariate Gaussians with imperfect advice},
  author={Bhattacharyya, Arnab and Choo, Davin and {George John},  Philips and Gouleakis, Themis},
  journal={International Conference on Machine Learning (ICML)},
  year={2025}
}

@article{choo2025learning,
  title={Learning-Augmented Online Bipartite Fractional Matching},
  author={Choo, Davin and Jin, Billy and Shin, Yongho},
  journal={Conference on Neural Information Processing Systems (NeurIPS)},
  year={2025}
}

@article{canonne2020survey,
  title={A survey on distribution testing: Your data is big. But is it blue?},
  author={Canonne, Cl{\'e}ment L},
  journal={Theory of Computing},
  pages={1--100},
  year={2020},
  publisher={Theory of Computing Exchange}
}

@inproceedings{daskalakis2017square,
  title={Square Hellinger subadditivity for Bayesian networks and its applications to identity testing},
  author={Daskalakis, Constantinos and Pan, Qinxuan},
  booktitle={Conference on Learning Theory},
  pages={697--703},
  year={2017},
  organization={PMLR}
}

\end{document}